\documentclass[pdflatex,sn-mathphys-ay]{sn-jnl}

\usepackage{amsmath, amsthm, amsfonts}
\usepackage{mathtools} 

\usepackage{braket}
\usepackage{textcomp}
\usepackage{xspace}
\usepackage{calc}

\usepackage{graphicx}
\usepackage{tikz}
\usetikzlibrary{matrix, positioning, tikzmark}

\usepackage{epstopdf}
\AppendGraphicsExtensions{.tiff}

\usepackage{array, tabularx, booktabs, multirow, makecell}
\renewcommand{\arraystretch}{1.1}

\newcolumntype{Y}{>{\centering\arraybackslash}X}

\usepackage{subcaption}
\usepackage{float}

\usepackage{xcolor}

\usepackage{enumerate}
\usepackage{stackengine}

\usepackage{listings}

\usepackage[title]{appendix}

\renewcommand{\Vec}[1]{\mathbf{#1}} 

\usepackage{hyperref}

\newcommand{\code}[1]{\texttt{#1}}
\newcommand{\expect}[1]{\mathbb{E}\left[#1\right]}
\theoremstyle{thmstyleone}%
\newtheorem{theorem}{Theorem}
\theoremstyle{thmstyletwo}%
\newtheorem{remark}{Remark}%

\theoremstyle{thmstylethree}%

\begin{document}

\title[QMTL]{Parameter-efficient Quantum Multi-task Learning}

\author*[1,2]{\fnm{Hevish} \sur{Cowlessur}}\email{hcowlessur@student.unimelb.edu.au}

\author[2]{\fnm{Chandra} \sur{Thapa}}\email{chandra.thapa@csiro.au}

\author[1]{\fnm{Tansu} \sur{Alpcan}}\email{tansu.alpcan@unimelb.edu.au}

\author[2]{\fnm{Seyit} \sur{Camtepe}}\email{seyit.camtepe@csiro.au}
\affil[1]{\orgdiv{Department of Electrical and Electronic Engineering}, \orgname{University of Melbourne}, \orgaddress{\city{Parkville}, \postcode{3010}, \state{VIC}, \country{Australia}}}

\affil[2]{\orgname{CSIRO}, \orgaddress{\city{Sydney}, \state{NSW}, \country{Australia}}}


\abstract{
Multi-task learning (MTL) improves generalization and data efficiency by jointly learning related tasks through shared representations. In the widely used hard‑parameter‑sharing setting, a shared backbone is combined with task‑specific prediction heads. However, task‑specific parameters can grow rapidly with the number of tasks. Therefore, designing multi‑task heads that preserve task specialization while improving parameter efficiency remains a key challenge.
In Quantum Machine Learning (QML), variational quantum circuits (VQCs) provide a compact mechanism for mapping classical data to quantum states residing in high‑dimensional Hilbert spaces, enabling expressive representations within constrained parameter budgets. Leveraging this property, we propose a \emph{parameter‑efficient quantum multi-task learning} (QMTL) framework that replaces conventional task‑specific linear heads with a fully quantum prediction head in a hybrid architecture. The model consists of a VQC with a shared, task‑independent quantum encoding stage, followed by lightweight task‑specific \textit{ansatz blocks} enabling localized task adaptation while maintaining compact parameterization.
Under a controlled and capacity-matched formulation where the shared representation dimension grows with the number of tasks, our parameter-scaling analysis demonstrates that a standard classical head exhibits quadratic growth, whereas the proposed quantum head parameter cost scales linearly. 
We evaluate QMTL on three multi-task benchmarks spanning natural language processing, medical imaging, and multimodal sarcasm detection, where we achieve performance comparable to, and in some cases exceeding, classical hard-parameter-sharing baselines while consistently outperforming existing hybrid quantum MTL models with substantially fewer head parameters. 
We further demonstrate QMTL's executability on noisy simulators and real quantum hardware, illustrating its feasibility at the circuit scales considered in this study.
}

\keywords{Quantum Multi-task Learning, Hybrid Quantum Machine Learning, Variational Quantum Circuits, Parameter-efficiency, Multi-task Learning}



\maketitle

\section{Introduction}\label{sec:intro}

Multi-task learning (MTL) has become a standard strategy for improving generalization and data efficiency by learning multiple related tasks jointly and enabling knowledge transfer through shared representations~\citep{caruana_multitask_1997, zhang2021survey, ruder2017overview}. 
One of the earliest and most widely adopted formulations follows a hard-parameter-sharing design, in which a common backbone produces a shared feature representation that is subsequently processed by task-specific prediction heads~\citep{caruana_multitask_1997, ruder2017overview}. This backbone-plus-head structure underpins many practical systems, including large pre-trained language models evaluated on heterogeneous benchmarks~\citep{wang_glue_2018}, as well as medical imaging models that jointly diagnose multiple thoracic conditions from chest radiographs~\citep{irvin_chexpert_2019}.

Despite its simplicity and empirical success, hard-parameter-sharing MTL becomes challenging when tasks are weakly related or competing. Because multiple task losses jointly update the same shared parameters, the optimization process can suffer from gradient conflict, negative transfer, task dominance, or capacity competition~\citep{zhang2024proactive, ma2018modeling, zhang2021survey, yu2020gradientsurgerymultitasklearning, standley2020taskslearnedmultitasklearning, yang2026disentanglingtaskconflictsmultitask}. Task-specific prediction heads provide output-level flexibility, but they do not directly eliminate interference arising in the shared representation, where conflicting gradient signals are still aggregated. A common practical response is therefore to increase task-specific capacity or introduce more selective parameter sharing so that each task can better adapt the shared features to its own objective~\citep{misra2016crossstitchnetworksmultitasklearning, shi2023recon}. However, this introduces an important trade-off: improving task-specific flexibility in classical multi-task architectures often increases the number of trainable parameters and/or architectural complexity, particularly when the shared representation is high-dimensional and the number of tasks is large. This creates a practical need for multi-task architectures that retain task-specific specialization while improving parameter efficiency relative to the hard-parameter-sharing MTL architecture.

A broad literature has proposed advanced strategies to address task interference, including soft parameter sharing, mixture-of-experts architectures, task routing mechanisms, cross-stitch style feature sharing, and gradient-balancing methods~\citep{ruder2017overview, zhang2021survey, misra2016crossstitchnetworksmultitasklearning, strezoski2019tasklearningtaskrouting, yu2020gradientsurgerymultitasklearning}. These approaches improve flexibility by selectively sharing computation or modifying the joint optimization process, making them better suited to weakly related tasks. At the same time, they typically introduce greater architectural complexity and, in many cases, additional trainable parameters. In this work, we therefore focus on the standard hard-parameter-sharing setting, which remains a widely used and well-understood baseline and provides a controlled framework for isolating how architectural changes affect parameter efficiency under multi-task specialization.

Recent advances in quantum machine learning (QML) suggest that variational quantum circuits (VQCs), through quantum superposition, provide a compact mechanism for embedding classical data in high-dimensional Hilbert spaces using relatively few trainable parameters~\citep{peruzzo2014variational, schuld2019quantum, sim2019expressibility}. This has motivated studies at the intersection of QML and MTL. 
For example, a hybrid quantum neural network (HQNN) has been proposed in multi-task settings~\citep{phukan_hybrid_2024}. However, they typically preserve conventional classical prediction layers on top of the quantum representations. 
As a result, the parameter growth associated with classical multi-head output structures remains largely unchanged, leaving the potential of quantum circuits to improve head-level parameter efficiency underexplored.

In this work, we propose a hybrid quantum-classical multi-task architecture that replaces conventional task-specific heads with a compact two-stage VQC serving as a fully quantum multi-task head on top of a classical backbone. The proposed design consists of a shared quantum encoding stage that maps the latent representation $\mathbf{Z}$ into a $Q$-qubit quantum state, followed by lightweight task-specific quantum subcircuits and measurement schemes. This structure enables multiple task outputs to be extracted from a small subset of qubits, allowing task-specific processing while limiting the growth of trainable parameters.
Motivated by the broader trade-off in MTL between maintaining shared representations and providing sufficient task-specific flexibility, the proposed architecture combines a compact shared quantum encoder with localized task-specific subcircuits to provide a parameter-efficient form of task-specific specialization without requiring classical output layers.

We analytically show that, under a controlled capacity-matching formulation, the number of trainable parameters in the proposed quantum head grows linearly with the number of tasks. In contrast, under comparable design assumptions, classical hard-parameter-sharing multi-head baselines require substantially more parameters and exhibit quadratic growth as the number of tasks increases. 
Empirical evaluations demonstrate that the proposed approach achieves competitive multi-task performance while using significantly fewer trainable parameters, indicating that improved parameter efficiency does not come at the expense of predictive performance. We further validate the executability of the architecture on noisy simulators and real quantum hardware, supporting its practicality for current-generation quantum devices. Taken together, these results show that the proposed quantum head achieves a favorable performance-parameter efficiency trade-off for hybrid quantum MTL in the near-term setting.

Overall, the key contributions of this work are threefold. First, we introduce a quantum multi‑task head architecture that decomposes task‑independent quantum encoding from lightweight task‑specific quantum specialization within a single variational circuit. Second, we present a structured parameter‑scaling analysis comparing classical hard‑parameter‑sharing heads and a quantum multi‑task head within a controlled, capacity‑matched formulation that explicitly shows how the shared representation dimension scales with the number of tasks. Third, we validate the proposed approach empirically across heterogeneous multi‑task benchmarks and further demonstrate its executability on current noisy quantum hardware. In detail, these contributions are summarized as follows:

\begin{itemize}
\item \textbf{Novel architecture: }We introduce a fully quantum MTL head architecture that separates a shared, task-independent quantum encoder from $T$ lightweight, task-specific subcircuits embedded within a single VQC. Critically, we leverage a qubit-efficient encoding scheme to compactly encode classical features onto a $Q$-qubit quantum circuit. 
Furthermore, by using sets of commuting and non-commuting observables for each task to decouple the qubit count from the output dimensionality, we enable the efficient extraction of rich, task-specific predictions from small quantum sub-registers.
\item \textbf{Analytical framework: }
We formulate the parameter-scaling problem for both classical and proposed QMTL heads and derive closed-form scaling expressions. Under a standard hard-parameter-sharing formulation with independent task-specific heads, and under natural design constraints (Equations \ref{eq:constraint1}--\ref{eq:constraint3}), we prove (Theorem \ref{thm:param_scaling}) that the classical head exhibits quadratic scaling $\mathcal{O}(T^2)$, while the proposed QMTL head scales linearly as $\mathcal{O}(T)$, where $T$ is the number of tasks.

\item \textbf{Empirical analysis:} We evaluate QMTL on three diverse benchmarks (GLUE, CheXpert, and Extended MUStARD) spanning text, images, and multimodal data. Our method matches or exceeds classical baselines while using substantially fewer head parameters (up to $12\times$ reduction) and consistently outperforms existing hybrid quantum-classical (HQNN) baselines. We demonstrate successful implementation on real quantum hardware (IBM Quantum), showing that QMTL retains meaningful performance under realistic noise constraints, validating practical feasibility on near-term quantum devices. We provide detailed ablations on parameter sensitivity and the effect of entanglement in the shared encoder, clarifying which design choices drive performance and efficiency.
\end{itemize}

The remainder of the paper is structured as follows. Section~\ref{sec:related_works} reviews classical and quantum multi-task learning literature, positioning our work in context. Section~\ref{sec:problem} formalizes the problem and research questions we address. Section~\ref{sec:our_model} presents the proposed quantum architecture and provides a rigorous parameter-scaling analysis. Section~\ref{sec:exp_feasibility} describes experimental setups and results across three benchmarks. Section~\ref{sec:ab_studies} presents ablation studies on parameter sensitivity and entanglement. Section~\ref{sec:real_device} reports hardware deployment results. In Section~\ref{sec:discussion}, we summarize the main findings and relate them to the parameter-efficiency analysis. Finally, Section~\ref{sec:concl} presents our overall conclusions, including limitations and future directions.


\section{Related Works}\label{sec:related_works}
In this section, we discuss the literature relevant to classical multi-task learning (MTL) and the intersection of quantum computing and MTL.

\subsection{Classical Multi-Task Learning (MTL)}
In the classical machine learning domain, MTL aims to improve generalization by jointly learning multiple related tasks, allowing information to be shared across tasks during training \citep{caruana_multitask_1997, ruder2017overview, zhang2021survey}. As an inductive transfer learning approach, MTL can improve data efficiency and reduce overfitting compared to training separate models for each task.

A central design question in MTL is how model parameters should be shared across tasks. One of the earliest and most widely adopted approaches is \emph{hard-parameter-sharing} (also known as the feature transformation approach \citep{zhang2021survey}), in which tasks share a common representation while maintaining task-specific output layers. 
However, sharing parameters, as in hard-parameter-sharing MTL, across tasks also introduces important challenges. Because all tasks update the same shared representation, conflicting gradient signals from weakly related tasks can lead to \emph{negative transfer}, degrading performance on some tasks \citep{standley2020taskslearnedmultitasklearning,yu2020gradientsurgerymultitasklearning,sener2019multitasklearningmultiobjectiveoptimization}. In extreme cases, such interference may cause severe performance deterioration on previously learned tasks, a phenomenon related to \emph{catastrophic forgetting} \citep{Kirkpatrick_2017, kar2022preventing}. Furthermore, while task-specific output layers provide partial parameter isolation between tasks, their parameter count increases with the number of tasks and the dimensionality of the shared representation. As the number of tasks or output dimensions grows, these task-specific components can therefore contribute substantially to the overall parameter count.

To address these issues, a broad range of MTL architectures has been proposed. Some approaches focus on improving parameter efficiency by reducing task-specific parameters while maintaining shared representations. However, limiting task-specific capacity can make it more difficult for the model to separate task-specific knowledge, potentially worsening task conflicts \citep{gangwar2025parameterefficientmultitasklearningprogressive, yang2026disentanglingtaskconflictsmultitask}. Other approaches explicitly aim to mitigate task interference through architectural mechanisms such as soft parameter sharing \citep{misra2016crossstitchnetworksmultitasklearning}, task-routing and gating networks \citep{rosenbaum2017routingnetworksadaptiveselection}, mixture-of-experts models \citep{ma2018modeling}, and gradient-balancing techniques \citep{zhang2021survey,ruder2017overview}. While these approaches can reduce task interference, they often introduce additional architectural complexity or task-specific parameters. Consequently, balancing effective task specialization with parameter efficiency remains an open challenge in MTL. Owing to its simplicity, computational efficiency, and strong empirical performance, the hard-parameter-sharing architecture therefore remains widely used as a reference baseline when benchmarking more complex MTL architectures.

While these challenges have been extensively studied in classical MTL, their implications remain largely unexplored in the QML setting. Quantum feature mappings can embed classical data into high-dimensional, expressive Hilbert spaces using compact, parameterized quantum circuits with relatively few trainable parameters \citep{schuld2019quantum, huang2021power, caro2021encoding, ranga2024quantum}. This property suggests that quantum architectures may offer a promising direction for achieving parameter-efficient yet expressive shared representations in multi-task settings. 
Motivated by this gap, we investigate integrating quantum representations into an MTL framework. 
To the best of our knowledge, this work represents an early systematic investigation of parameter efficiency in quantum multi‑task learning with explicit scaling analysis.
We benchmark the proposed approach against the standard hard-parameter-sharing formulation, in which classical task-specific output layers are replaced with a compact quantum circuit. Our design aims to retain sufficient task-specific capacity to potentially limit cross-task interference and mitigate negative transfer in MTL, while substantially reducing the growth of task-specific parameters. Using the hard-parameter-sharing architecture as the reference provides a controlled, well-established baseline, enabling a fair comparison that isolates the effect of quantum components in MTL. 

\subsection{Hybrid and Quantum-Assisted Multi-Task Learning}
A number of works investigate hybrid quantum-classical models in multi-task or multi-output settings, primarily to leverage the high representational power of variational quantum circuits (VQCs) on complex, structured data. For example, \cite{phukan_hybrid_2024} proposed a hybrid architecture for joint analysis of sarcasm, sentiment, and emotion across text, audio, and visual modalities, in which a shared classical encoder feeds into task-specific VQCs. Similar hybrid pipelines attach small quantum circuits to classical backbones for language understanding, multimodal fusion, or time-series prediction, and then use classical linear layers to produce task-specific logits \citep{buonaiuto_multilingual_2025, li2025qmlsc}. In all of these models, the primary motivation for introducing QML is to exploit the expressive feature maps induced by VQCs, while the final prediction layers remain classical.

From the perspective of parameter efficiency in multi-task learning, this creates an important gap. Because the quantum modules are followed by classical linear heads, the overall head parameter count still scales with the number of tasks and the latent dimension in essentially the same way as in purely classical MTL: each task is equipped with its own classical readout, and all substantial parameter savings must come from the backbone, not from the head structure itself. Moreover, existing hybrid QML work rarely analyzes how the number of trainable parameters in the heads scales with the number of tasks. 

By contrast, our architecture uses a shared variational quantum encoder that maps classical features $\mathbf{Z}\in\mathbb{R}^d$ into a high-dimensional Hilbert space, followed by $T$ lightweight quantum subcircuits, all embedded in the \emph{same} circuit and optimized jointly. All trainable parameters in the head are quantum, and we provide an explicit scaling analysis showing that the quantum head scales linearly in $T$, with the per-task cost governed by the number of qubits and layers in the small task-specific subcircuits.

\subsection{Fully Quantum Multi-Task Architectures}

Closer to our setting are recent proposals for fully quantum multi-task learning (QMTL) architectures. Configured quantum reservoirs have been shown to support multi-task prediction of chaotic and biological dynamics on a single quantum device \citep{xia2023configured}. More recently, in the domain of quantum finance, contextual quantum neural networks with a share-and-specify ansatz have been introduced for multi-asset stock price prediction \citep{mourya2026contextual}. In that work, task-specific operators are controlled by quantum labels, enabling several assets to be trained simultaneously on a single circuit, demonstrating a fully quantum QMTL architecture tailored to financial time series.  

These works highlight that quantum coherence and shared quantum representations can be beneficial in multi-task settings. However, they focus on time-series or physical-system modeling, often without an explicit classical backbone, and do not consider classical multi-task benchmarks with heterogeneous tasks or analyze head parameter counts as a function of the latent dimension and the number of tasks. Our model addresses this gap by (i) operating on a fixed classical feature vector $Z$ produced by a backbone/feature extractor, (ii) using a single VQC that separates a task-independent encoding from task-specific subcircuits, and (iii) leverages sets of commuting and non-commuting observables for measurements across tasks, so that the number of logits $r$ is not tied to the number of qubits. We show how this leads directly to the favorable parameter scaling derived in Section~\ref{subsec:param_efficiency}.


\section{Problem Statement and Research Questions}\label{sec:problem}

We consider the standard hard-parameter-sharing MTL setting~\citep{caruana_multitask_1997, ruder2017overview}, where a single feature extractor is shared across all tasks, and task-specific predictions are produced by lightweight heads operating on the shared representation. Let $\mathcal{D}=\{(\mathbf{x}_i,\{\mathbf{y}_{i,t}\}_{t=1}^T)\}_{i=1}^N$ denote a dataset of $N$ input samples, where $\mathbf{x}_i$ is the $i$-th input and $\mathbf{y}_{i,t}$ denotes the target label associated with task $t \in \{1,\ldots,T\}$. A shared backbone network $\Phi_{\text{backbone}}$ maps inputs to a latent representation $\mathbf{Z}_i=\Phi_{\text{backbone}}(\mathbf{x}_i)\in\mathbb{R}^d$. Task-specific prediction heads $h_t:\mathbb{R}^d\to\mathbb{R}^{r}$ produce outputs $\hat{\mathbf{y}}_{i,t}=h_t(\mathbf{Z}_i)$. We assume the task index $t$ is known when forming predictions (as in standard MTL benchmarks); the model can output all $\{\hat{\mathbf{y}}_{i,t}\}_{t=1}^T$ or use the corresponding head $h_t$ for the task of interest.

In MTL, the objective is to jointly minimize a combined loss:
\begin{equation}
\mathcal{L}_{\text{MTL}}=\frac{1}{N}\sum_{i=1}^N\sum_{t=1}^{T}\lambda_t\,\mathcal{L}_t(\hat{\mathbf{y}}_{i,t},\mathbf{y}_{i,t}),
\end{equation}
where $\lambda_t$ are per-task loss weights (typically uniform), and $\mathcal{L}_t$ is the task-specific loss (e.g., cross-entropy for classification, MSE for regression).

In practice, the shared backbone $\Phi_{\text{backbone}}$ is often large and pre-trained (e.g., a convolutional network or a transformer) and is treated as fixed or lightly fine-tuned for downstream tasks. As a result, a substantial portion of the task-dependent design freedom lies in the multi-task head. In standard hard-parameter-sharing MTL, task-specific heads typically instantiate separate mappings from the shared representation $\Vec{Z}\in\mathbb{R}^d$ to task outputs, leading to a head parameter count of ${\sum_{t=1}^T r_t(d+1)}$ for linear heads. When the number of tasks and/or the shared feature dimension $d$ is large, the head can become a non-trivial component of the overall parameter budget. This motivates the central question studied in this work: Can we redesign the multi-task head to improve parameter efficiency without sacrificing predictive performance?

Thus, we frame the central problem addressed in this work as: 
\emph{How can we construct a multi-task head that leverages quantum properties such as superposition,
entanglement, and high-dimensional Hilbert spaces to reduce parameter cost while maintaining
sufficient task-specific capacity and predictive performance across tasks?}

\noindent
This leads to specific sub-questions that allow us to systematically address the main question:
\begin{itemize}
    \item How do we compactly encode the latent feature vector $\Vec{Z}$ into a quantum state that supports task-specific specialization? \textit{(Section~\ref{subsec:our_model})}
    \item How should we decompose the quantum MTL head into a shared encoding stage and task-specific subcircuits to maximize parameter savings? \textit{(Section~\ref{subsec:our_model})}
    \item How does the parameter scaling of the proposed quantum head compare quantitatively to classical baselines as a function of $d$, $T$, and $r$? \textit{(Section~\ref{subsec:param_efficiency})}
    \item How well does the proposed QMTL head transfer across domains and input modalities (text, vision, and multimodal) when paired with different feature extractors and heterogeneous task types? \textit{(Section~\ref{sec:exp_feasibility})}
\end{itemize}

\section{Parameter-efficient Quantum Multi-task Learning \label{sec:our_model}}
In this section, we outline the quantum design principles underlying our architecture and analyze their impact on parameter scaling in multi-task learning. While classical MTL is a mature and active research area, our focus is on isolating the parameter-scaling behavior of standard multi-head formulations within a common, controlled design setting. 

\subsection{Proposed quantum circuit architecture for QMTL\label{subsec:our_model}}

\noindent
\textbf{Design rationale.}\quad
The design of our QMTL head is motivated by three principles: (i) expressive shared quantum encoding, (ii) reuse of a single shared state across tasks, and (iii) measurement-centric readout for compact task outputs. We embed the classical feature vector $\mathbf{Z}$ into an exponentially large ambient Hilbert space (dimension $2^Q$) via quantum superposition on $Q$ qubits, using a parameterized feature-map circuit whose trainable parameters scale linearly with $Q$ (for fixed depth, $D$). 
The resulting quantum state provides an expressive shared representation whose amplitudes encode features common across tasks. 
Multi-task efficiency arises because this shared state is prepared once per input, then specialized by lightweight, task-local subcircuits rather than by large, task-specific parameter blocks. 
Finally, we obtain task outputs via multi-observable readout, measuring a set of Pauli expectation values on each task sub-register. This enables output dimensionality larger than the sub-register size, at the cost of additional shots rather than additional trainable parameters.

Fig.~\ref{fig:our_qmtl_circuit} illustrates the proposed parameter-efficient circuit architecture for multi-task learning. The complete circuit consists of two stages executed within a single VQC: (i) a shared, task-independent encoding stage that maps a backbone feature vector $\mathbf{Z}\in\mathbb{R}^{d}$ to a task-agnostic quantum state $|\psi_\Theta(\mathbf{Z})\rangle$, and (ii) a task-specialization stage that partitions the register into $T$ task-local subcircuits and produces task-specific predictions $\{\mathbf{y_t}\}_{t=1}^T$ via measurement.
We assume the task set $\{1,\dots,T\}$ is known, as in standard supervised multi-task learning, and the circuit outputs $(\hat{y}_1,\dots,\hat{y}_T)$ in a fixed order. We detail the two stages below.

\begin{figure}[h]
    \centering
    \includegraphics[width=\linewidth]{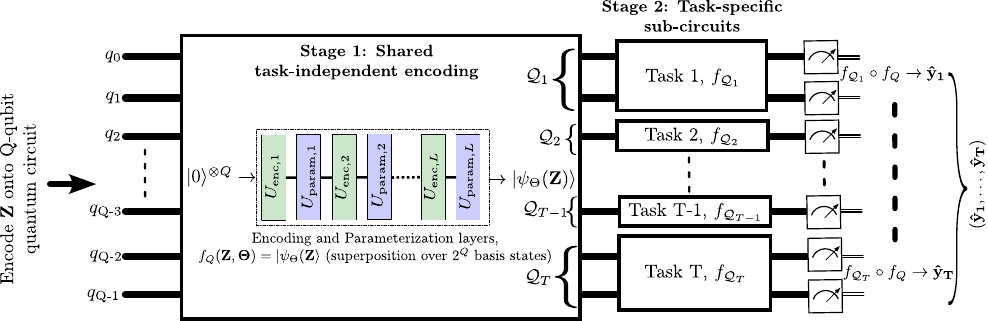}
    \caption{
    Proposed Quantum Circuit architecture for multi-task learning. An extracted feature vector $\textbf{Z}$ is embedded into a $Q$-qubit register via a shared trainable state-preparation stage $f_Q(\mathbf{Z},\mathbf{\Theta})$. This embeds $\textbf{Z}$ to a quantum state $|\psi_\Theta(\mathbf{Z})\rangle$ in superposition over computational basis states (a vector in a $2^Q$-dimensional complex Hilbert space). The register is then partitioned into $T$ disjoint qubit sub-registers, $\mathcal{Q}_t \subseteq \{q_0,\dots, q_{Q-1}\}$, and task-specific parameterized blocks $\{f_{\mathcal{Q}_t}\}_{t=1}^T$ are applied on each sub-register to produce $T$ measurement outcomes $\{\mathbf{\hat{y}_t\}}^T_{t=1}$ within one circuit execution. This enables parameter-efficient MTL by sharing a common trainable backbone while allowing task specialization through localized operations and readouts.}
    \label{fig:our_qmtl_circuit}
\end{figure}

\subsubsection*{Stage 1: Shared task-independent encoding}
For each qubit $q_i \in \{q_0, \ldots, q_{Q-1}\}$, we first apply a Hadamard gate to create an equal superposition:
\begin{equation}
\ket{\psi_0} = H^{\otimes Q} \ket{0}^{\otimes Q} = \frac{1}{\sqrt{2^Q}} \sum_{s=0}^{2^Q-1} \ket{s}.
\end{equation}
Then we apply \(L\) blocks of alternating encoding and trainable operations to embed the feature vector \(\mathbf{Z} \in \mathbb{R}^d\) into a \(Q\)-qubit state \(\ket{\psi_\Theta(\mathbf{Z})}\), with \(Q\!\times\!L=d\). This encoding method is adapted from the qubit-efficient data loading technique proposed in \citep{cowlessur_hybrid_2025}. 
\begin{equation}
\ket{\psi_\Theta(\mathbf{Z})}
=\Big(\prod_{\ell=1}^{L} U_{\text{param},\ell}(\boldsymbol{\theta}_\ell)\,U_{\text{enc},\ell}(\mathbf{Z})\Big)\ket{\psi_0},
\end{equation}
where
\[
U_{\text{enc},\ell}(\mathbf{Z})
=\bigotimes_{i=Q(\ell-1)}^{Q\ell-1} R_x(Z_i),\qquad
U_{\text{param},\ell}(\boldsymbol{\theta}_\ell)
=\Big(\prod_{\text{ladder}} \mathrm{CNOT}\Big)\,
\bigotimes_{i=0}^{Q-1}\!R_y(\theta_{\ell,i})R_z(\theta_{\ell,i}).
\]
This prepares a task-agnostic shared representation on the register,
\(\ket{\psi_\Theta(\mathbf{Z})}\) (or \(\rho_\Theta(\mathbf{Z})=\ket{\psi}\!\bra{\psi}\) in density-operator form), which downstream task heads locally act upon. The state implicitly encodes the shared features $\mathbf{Z}$ in a superposed, high-dimensional representation, $\ket{\psi_\Theta(\mathbf{Z})}$.

\subsubsection*{Stage 2: Task-specific subcircuits and multi-observable readout}
The shared $Q$-qubit register is partitioned into $T$ disjoint qubit subsets, $\mathcal{Q}_t=\{q_{t,1}, \dots, q_{t,S_t}\}$, where $|\mathcal{Q}_t|=S_t$, and $\sum_t S_t=Q$. For each task $t$, we apply a shallow parameterized unitary transformation, $U_t(\phi_t)$. In the circuit diagram (Fig.~\ref{fig:our_qmtl_circuit}), this task-specific block is denoted by $f_{\mathcal{Q}_t}$ and implemented as $f_{\mathcal{Q}_t}\equiv U_t(\boldsymbol{\phi}_t)$. We instantiate $U_t(\phi_t)$ using Pennylane's \code{StronglyEntanglingLayer} \emph{ansatz} \citep{pennylane_2022}, which alternates single-qubit rotations with fixed CNOT entangling gates. The sub-register size $S_t$ is set per task according to the required output dimension, while the number of \emph{ansatz} layers is treated as a tunable depth hyperparameter in our experiments.

To obtain the $r_t$ outputs for task $t$, instead of allocating one qubit per logit, we evaluate multiple Pauli observables on $\mathcal{Q}_t$, arranged into commuting measurement groups. For example, assuming task $t=1$ has $3$-dimensional output logits, we assign $\mathcal{Q}_1 = \{q_{1,1}, q_{1,2}\}$ where $|\mathcal{Q}_1|=S_1=2$ and $r_1=3$ we use:
\begin{equation}
\label{eq:readout}
\mathbf{o}_t \;=\; \big(\,\langle Z_{q_{1,1}}\rangle,\;\langle Z_{q_{1,2}}\rangle,\;\langle X_{q_{1,1}} X_{q_{1,2}}\rangle\,\big),  
\end{equation}
where $\langle\cdot\rangle$ denotes the expectation value with respect to the task-adapted quantum state (i.e., after applying the task-specific \emph{ansatz} $U_1(\phi_1)$), commuting observables can be estimated from the same state preparation and basis setting; non-commuting groups require separate settings (higher shot cost but no additional trainable parameters).

While expectation values are bounded, downstream loss functions and task‑specific normalization implicitly rescale these outputs; we additionally explore affine calibration where required (e.g., regression tasks). We use these expectations directly as task outputs,
\begin{equation}
\label{eq:logits}
\hat{\mathbf{y}}_t = \mathbf{o}_t = (o_{t,1}, \ldots, o_{t,r_t}),
\end{equation}
and compute task losses from $\hat{\mathbf{y}}_t$ without additional classical
post-processing layers. This design choice isolates the contribution of quantum parameter efficiency from classical post-processing capacity.

In the next subsection, we formalize how these design choices translate into the total trainable parameter count as a function of the number of tasks, and compare against standard classical multi-head baselines.

\subsection{Parameter Efficiency Analysis}
\label{subsec:param_efficiency}
 \begin{figure}[h]
    \centering
    \includegraphics[width=\linewidth]{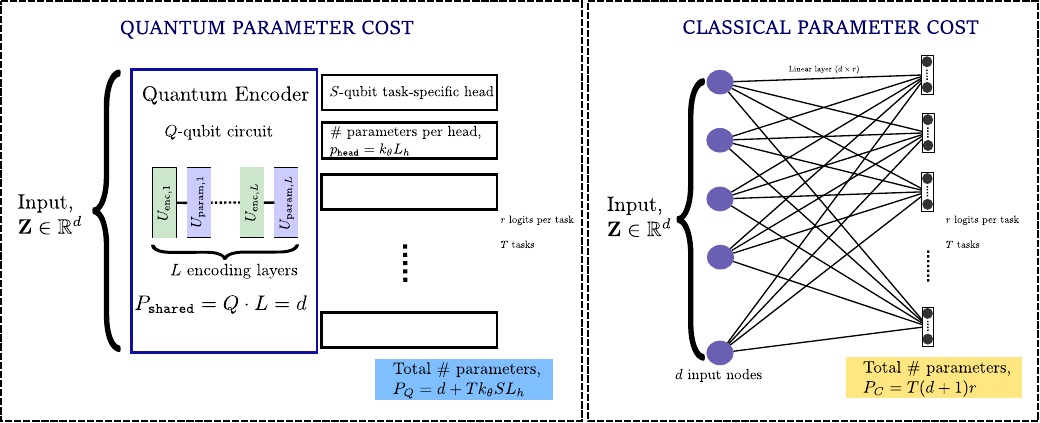}
    \caption{Total number of trainable parameters in our proposed architecture (shown in Fig.~\ref{fig:our_qmtl_circuit}) compared against the simplest classical counterpart. We assume that the output dimension of each task is $r$ to aid analysis in this section.}
    \label{fig:param_comparison_diag}
\end{figure}

We compare the number of trainable parameters in a standard classical multi-task head and in the proposed quantum multi-task head. Throughout, we assume that the backbone network producing the shared latent representation is fixed and identical in both cases, so that only the head parameters differ. This controlled setup isolates the parameter growth attributable to task-specific heads. 

We acknowledge that more expressive multi-task learning approaches exist (e.g., adapter-based methods \citep{pfeiffer2020madxadapterbasedframeworkmultitask}, task-routing mechanisms \citep{strezoski2019tasklearningtaskrouting}, cross-stitch networks \citep{misra2016crossstitchnetworksmultitasklearning}, or mixture-of-experts architectures \citep{ma2018modeling}). 
However, such designs introduce additional task-conditioned modules or routing components to improve flexibility and mitigate task interference. 
Hence, in this work, we focus on the standard hard-parameter-sharing formulation with independent task-specific heads, which serves as a widely used and interpretable baseline. This choice enables a controlled analysis of parameter scaling, allowing us to isolate the effect of replacing classical task-specific heads with the proposed quantum multi-task head.

Let $\Vec{Z} \in \mathbb{R}^d$ denote the latent feature vector shared across tasks, and let $T$ be the number of tasks. For analytical simplicity, we assume that each task has the same output dimensionality $r$, so that each classical task-specific head output and quantum task-specific subcircuit readout are represented by $r$ logits.

\vspace{1em}
\noindent
\textbf{Classical Multi-Task Head Parameters.}\quad
We consider the standard hard-parameter-sharing formulation \citep{caruana_multitask_1997, ruder2017overview, zhang2021survey}, in which a shared backbone produces a task-independent representation $\mathbf{Z} \in \mathbb{R}^d$, followed by independent task-specific prediction heads.

While task heads are often implemented as small multi-layer perceptrons (MLPs), for clarity of analysis, we consider the simplest and most parameter-efficient instantiation: a single linear layer per task,
\begin{equation}
f_t^{(C)}(\mathbf{Z}) = \mathbf{W}_t \mathbf{Z} + \mathbf{b}_t,
\end{equation}
where $\mathbf{W}_t \in \mathbb{R}^{r \times d}$ and $\mathbf{b}_t \in \mathbb{R}^{r}$ are the task-specific parameters for task $t$, and $r$ denotes the output dimension of the task. Any deeper MLP-based head would only increase the parameter count relative to this baseline, and therefore would not alter the scaling comparison presented below. 
Therefore, the parameter count per task is:
\begin{equation}
\label{eq:O(rd)_rel}
p^{(C)}_{\text{head}} = rd + r = r(d + 1),
\end{equation}
and the total classical head parameter count is:
\begin{equation}
P_C = T \times r(d+1) = Tr(d+1).
\end{equation}

\vspace{1em}
\noindent
\textbf{Quantum Multi-Task Head Parameters.}\quad
The quantum head has two components:
\begin{enumerate}
    \item \textbf{Shared encoding stage:} The encoding subcircuit comprises $L$ layers, each with one trainable rotation per qubit and layer (per Section~\ref{subsec:our_model}). Thus
    \begin{equation}
    P_{\text{shared}} = Q \times L.
    \end{equation}
    \item \textbf{Task-specific subcircuits:} Each task $t$ has a shallow subcircuit with depth $L_{t,h}$ and $k_{t,\theta}$ trainable parameters per qubit per layer. For analytical tractability, we assume the task-specific subcircuits have the same configuration, by setting $S_t = S, L_{t,h} = L_{h}$, and $k_{t,\theta} = k_{\theta}$. Therefore, the parameter count in each head is:
    \begin{equation}
    p^{(Q)}_{\text{head}} = S \times L_h \times k_\theta,
    \end{equation}
    
    and the total quantum head parameters:
    \begin{equation}
    P_Q = P_{\text{shared}} + T \times p^{(Q)}_{\text{head}} = QL + T \times k_\theta S L_h.
    \end{equation}
\end{enumerate}

\vspace{1em}
\noindent
\textbf{Design Constraints and Scaling Relations.}\quad
To make the parameter-scaling comparison explicit, we introduce two structural design constraints that relate circuit width, depth, and task partitioning:

\begin{equation}
\boxed{QL = d} \label{eq:constraint1}.
\end{equation}
We adopt this constraint as a capacity‑matching design choice, ensuring that the quantum encoder processes the full $d$-dimensional latent representation under a fixed parameter budget (encoding stage with $L$ layers and $Q$ qubits). Next, we set:

\begin{equation}
\boxed{Q = S \times T} \label{eq:constraint2}.
\end{equation}
This partitions the $Q$ qubits into $T$ disjoint blocks of $S$ qubits, one block per task.

Substituting these into Equation~(\ref{eq:constraint1}):
\begin{equation}
\boxed{d = QL = STL} \label{eq:constraint3}.
\end{equation}

\textit{\textbf{Remarks:}
The scaling gap is structural and arises from how parameters are allocated under capacity-matched designs. In classical hard-parameter-sharing multi-head architectures, each task employs a dense mapping $W_t \in \mathbb{R}^{r \times d}$, resulting in $O(r d)$ parameters per task (Equation~\ref{eq:O(rd)_rel}). 
The resulting quadratic scaling of classical heads does not arise from task multiplicity alone (i.e., the presence of $T$ tasks), but from the need to increase the shared representation dimension $d$ with $T$ under comparable capacity assumptions (Equation~\ref{eq:constraint3}). This also reflects the requirement that the shared representation remains sufficiently expressive as the number and diversity of tasks increase, avoiding representational bottlenecks in the shared encoder. In this regime, where $d = \mathcal{O}(T)$, the total number of head parameters scales as $\mathcal{O}(T \cdot r d) = \mathcal{O}(T^2)$.
In contrast, our architecture constrains parameter growth through a compact shared quantum encoding stage, while each task introduces only a shallow subcircuit acting on a fixed-size subset of qubits. Task outputs are obtained via observable-based readout, which increases measurement cost but does not introduce additional trainable parameters. Under constraints~\ref{eq:constraint1}--\ref{eq:constraint3}, this design yields linear scaling in $T$ for the quantum head, as formalized in Theorem~\ref{thm:param_scaling}.}

\begin{theorem}[Parameter Scaling of Classical vs Quantum Heads]
\label{thm:param_scaling}

Under design constraints~(\ref{eq:constraint1})--(\ref{eq:constraint3}), the parameter counts as functions of task count $T$ are:

\textbf{Classical:}
\begin{equation}
P_C = Tr(d+1) = Tr(STL+1) \approx Tr \times STL = \mathcal{O}(T^2)
\end{equation}

\textbf{Quantum:}
\begin{equation}
P_Q = QL + Tk_\theta SL_h = STL + Tk_\theta SL_h = ST(L + k_\theta L_h) = \mathcal{O}(T)
\end{equation}

The ratio scales as:
\begin{equation}
\frac{P_Q}{P_C} \approx \frac{ST(L + k_\theta L_h)}{STLr \times T} = \frac{L + k_\theta L_h}{Lr} \times \frac{1}{T} \to 0 \quad \text{as } T \to \infty.
\end{equation}
\end{theorem}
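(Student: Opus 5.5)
The argument is a direct substitution of the design constraints into the parameter counts already derived in Section~\ref{subsec:param_efficiency}, followed by an asymptotic reading in $T$. We treat $S$, $L$, $L_h$, $k_\theta$ and $r$ as fixed constants independent of $T$. This is the regime singled out in the Remarks: each task receives a fixed-size sub-register and a fixed-depth ansatz, and the encoder depth $L$ is fixed, so that $d$ grows with $T$. With this convention made explicit, all asymptotic claims are in $T$ alone.

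\textbf{Step 1 (classical count).} Start from $P_C = Tr(d+1)$ and substitute $d = STL$ from constraint~(\ref{eq:constraint3}). This gives
\begin{equation}
P_C = T r(STL+1) = rSL\,T^2 + rT.
\end{equation}
Since $rSL>0$ is constant, $P_C=\Theta(T^2)$, and in particular $P_C=\mathcal{O}(T^2)$. The dropped term $rT$ justifies the ``$\approx$'' in the statement, because $rT/(rSLT^2)\to 0$.

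\textbf{Step 2 (quantum count).} Start from $P_Q = QL + T k_\theta S L_h$ and use $QL = d = STL$ from constraints~(\ref{eq:constraint1})--(\ref{eq:constraint3}). Factoring out $ST$ gives
\begin{equation}
P_Q = ST(L + k_\theta L_h).
\end{equation}
This is exactly linear in $T$ with constant slope $S(L+k_\theta L_h)$, so $P_Q = \Theta(T)$. Here $Q$ itself grows as $ST$, but that growth is absorbed into the shared count $QL$ and introduces no higher-order term.

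\textbf{Step 3 (ratio).} Use the exact expressions rather than the approximations:
\begin{equation}
\frac{P_Q}{P_C} = \frac{ST(L+k_\theta L_h)}{rT(STL+1)} = \frac{S(L+k_\theta L_h)}{r(STL+1)}.
\end{equation}
This is bounded above by $\frac{L+k_\theta L_h}{rL}\cdot\frac{1}{T}$ and tends to $0$ as $T\to\infty$, which recovers the stated asymptotic ratio.

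The only delicate point is not computational but interpretive. The $\mathcal{O}(T^2)$ versus $\mathcal{O}(T)$ separation holds only if $S, L, L_h, k_\theta, r$ do not grow with $T$. For example, if $L$ were allowed to shrink so that $d$ stayed fixed, the classical count would also become linear. The proof should therefore state this fixed-hyperparameter assumption explicitly as part of the reading of ``functions of $T$''. It should also note that integrality of $Q=ST$ and $d=QL$ is automatic under the constraints.
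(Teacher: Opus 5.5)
Your proposal is correct and follows the same route as the paper: substitute $QL=d=STL$ into $P_C$ and $P_Q$, then read off the leading order in $T$ with the hyperparameters held fixed. It is somewhat more careful than the paper's one-line proof, since you list $S$ explicitly among the $T$-independent constants (the paper names only $r, L, k_\theta, L_h$, although $P_Q=\mathcal{O}(T)$ needs $S$ fixed too), and you bound the exact ratio $\frac{S(L+k_\theta L_h)}{r(STL+1)}$ rather than relying on the approximation.
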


\begin{proof}
Substituting constraints~(\ref{eq:constraint1})--(\ref{eq:constraint3}) directly into the parameter count expressions yields the stated scalings. Since $r, L, k_\theta, L_h$ are fixed design hyperparameters independent of $T$, the leading-order behavior in $T$ dominates, establishing the claimed asymptotic scaling orders.
\end{proof}

\begin{remark}
The parameter ratio $\frac{P_Q}{P_C} \propto \frac{1}{T}$ demonstrates \emph{superlinear} efficiency gains as the number of tasks increases. For typical values ($r=2$, $L=3$, $k_\theta=1$, $L_h=1$), we have:
\begin{equation}
\frac{P_Q}{P_C} \approx \frac{4}{6} \times \frac{1}{T} = \frac{2}{3T}.
\end{equation}
Thus, for $T=10$ tasks, $P_Q \approx \frac{P_C}{15}$; for $T=100$, $P_Q \approx \frac{P_C}{150}$.
\end{remark}

\vspace{1em}
\noindent
\textbf{Implicit high-dimensional representation and relations to performance.}\quad

\noindent
The shared quantum encoder (Stage 1) prepares a quantum state
\begin{equation}
\ket{\psi_\Theta(\mathbf{Z})} \in \mathcal{H}^{2^Q},
\end{equation}
which represents the input in a $2^Q$-dimensional Hilbert space. This state is generated implicitly through a parameterized quantum circuit using $d = QL$ trainable parameters. 

Rather than explicitly constructing high-dimensional feature vectors, the quantum encoder provides a compact mechanism for embedding inputs into a shared, high-dimensional representation across tasks. In the context of MTL, this allows multiple task-specific sub-circuits to operate on a common expressive representation without increasing the number of trainable parameters in proportion to the implicit feature dimension.

In our experiments (Section~\ref{sec:exp_feasibility}), we instantiate this architecture with a fixed number of tasks and observe that, despite using substantially fewer parameters than the classical head baseline, the quantum head achieves comparable performance and, in some settings, outperforms it. These findings suggest that the shared quantum representation provides sufficient expressivity while enabling improved parameter efficiency.

\subsection{Training Strategy}

Training proceeds via standard hybrid quantum-classical optimization. We use the parameter-shift rule to compute gradients of quantum circuits:
\begin{equation}
\frac{\partial \expect{O}}{\partial \theta} = \frac{\expect{O(\theta + \frac{\pi}{2})} - \expect{O(\theta - \frac{\pi}{2})}}{2},
\end{equation}
which evaluates the circuit twice per parameter per gradient step. All parameters ($\Theta$ in the encoder and $\{\boldsymbol{\phi}_t\}$ in task-specific heads) are jointly optimized via backpropagation. We assume that task identity is known during training (and inference), consistent with standard supervised multi-task learning.

Training follows a balanced task-interleaved schedule, in which mini-batches from all tasks are sampled throughout training, so that shared parameters receive gradient updates from all tasks within each training epoch. This contrasts with sequential learning \citep{Kirkpatrick_2017}, where the model is trained on one task for multiple consecutive epochs before switching to another, a regime known to induce catastrophic forgetting in shared representations. In this work, the overall loss is computed as the sum (or average) of task-specific losses within each update step, so that gradients from all active tasks jointly update the shared encoder and task-specific subcircuits.  This ensures that the shared quantum parameters are optimized under simultaneous multi-task supervision rather than being dominated by a single task. For datasets with imbalanced label distributions (e.g., CheXpert \citep{CheXpertStanford2019}), we apply task-level loss normalization to prevent tasks with larger sample counts from disproportionately influencing the shared gradients.

The training procedure can be either parallel (all tasks in one mini-batch) or alternating (task-specific batches interleaved across steps), depending on the dataset labeling scheme. For datasets with per-sample labels for all tasks (e.g., CheXpert, Extended MUStARD~\citep{chauhan_sentiment_2020}), parallel joint training is feasible. For partially labeled datasets (e.g., GLUE~\citep{gluebenchmark}), we alternate task-specific updates as described in Appendix~\ref{appendix:methods}. More advanced approaches, such as continual learning, sparse training, task-agnostic inference, dynamic routing, or adaptive loss weighting, are complementary directions but fall outside the scope of this work.


\section{Quantum Multi Task Learning Experiments and Results}\label{sec:exp_feasibility}
In this section, we describe the experiments and present our simulation results when we evaluate the proposed QMTL head on three established multi-task benchmark datasets spanning text, medical imaging, and multimodal dialogue: \textbf{GLUE} \citep{gluebenchmark}, \textbf{CheXpert} \citep{CheXpertStanford2019}, and \textbf{Extended MUStARD} \citep{chauhan_sentiment_2020}. 

\subsection{Experimental setup}
\label{subsec:exp_overview}

We evaluate our proposed QMTL on three diverse benchmark datasets to demonstrate the breadth of applicability:
\begin{enumerate}
    \item \textbf{GLUE} (NLP): nine heterogeneous NLP tasks (binary/multiclass/regression),
    \item \textbf{CheXpert} (medical imaging): five chest X-ray pathology classification tasks, and
    \item \textbf{Extended MUStARD} (multimodal): five tasks spanning sarcasm, sentiment, and emotion recognition.
\end{enumerate}

Across all benchmarks, we extract feature vectors $\mathbf{Z}\in\mathbb{R}^d$ using a backbone similar to that in previous work. These consist of a fixed classical backbone, namely frozen BERT \citep{devlin2019bertpretrainingdeepbidirectional} for GLUE, ImageNet-pretrained DenseNet121 \citep{huang2018denselyconnectedconvolutionalnetworks} for CheXpert, and author-released multimodal pre-extracted features for extended MUStARD \citep{chauhan_sentiment_2020}. 

For each backbone and dataset, we compare QMTL's performance with two multi-task heads that serve as benchmarks. These are:  
\begin{enumerate}
    \item a classical multi-head baseline following the hard-parameter-sharing architecture, and 
    \item a hybrid quantum neural network (HQNN) baseline from \citep{phukan_hybrid_2024}
\end{enumerate}
Detailed dataset and tasks description, dataset preprocessing, metrics, and implementation details are deferred to Appendices~\ref{appendix:datasets}, ~\ref{appendix:models}, and ~\ref{appendix:methods}.
Fig~\ref{fig:exp_pipeline_small} summarizes the overall experimental pipeline.
\begin{figure}[htbp!]
\centering
\scriptsize
\begin{tikzpicture}[
    >=latex,
    node distance=0.5cm and 0.8cm,
    dataset/.style={rectangle, rounded corners, draw, align=center,
                    minimum width=1.9cm, minimum height=0.7cm},
    encoder/.style={rectangle, rounded corners, draw, align=center,
                    minimum width=2.2cm, minimum height=0.7cm, fill=gray!5},
    head/.style={rectangle, rounded corners, draw, align=center,
                 minimum width=2.6cm, minimum height=0.7cm},
    metric/.style={rectangle, rounded corners, draw, align=center,
                   minimum width=2.2cm, minimum height=0.7cm, fill=gray!10}
]

\node[dataset] (glue) {GLUE};
\node[dataset, below=0.8cm of glue] (chexpert) {CheXpert};
\node[dataset, below=0.8cm of chexpert] (mustard) {Ext.\\MUStARD};

\node[encoder, right=of glue] (enc_glue) {Frozen\\BERT};
\node[encoder, right=of chexpert] (enc_chexpert) {DenseNet-121};
\node[encoder, right=of mustard] (enc_mustard) {Text / Multi-\\modal};

\node[head, right=of enc_glue] (head_glue) {Classical / HQNN / QMTL};
\node[head, right=of enc_chexpert] (head_chexpert) {Classical / HQNN / QMTL};
\node[head, right=of enc_mustard] (head_mustard) {Classical / HQNN / QMTL};

\node[metric, right=of head_glue] (metric_glue) {GLUE\\metrics};
\node[metric, right=of head_chexpert] (metric_chexpert) {CheXpert\\metrics};
\node[metric, right=of head_mustard] (metric_mustard) {Ext. MUStARD\\metrics};

\draw[->] (glue) -- (enc_glue);
\draw[->] (chexpert) -- (enc_chexpert);
\draw[->] (mustard) -- (enc_mustard);

\draw[->] (enc_glue) -- (head_glue);
\draw[->] (enc_chexpert) -- (head_chexpert);
\draw[->] (enc_mustard) -- (head_mustard);

\draw[->] (head_glue) -- (metric_glue);
\draw[->] (head_chexpert) -- (metric_chexpert);
\draw[->] (head_mustard) -- (metric_mustard);

\end{tikzpicture}
\caption{High-level overview of the experimental pipeline: Dataset $\rightarrow$ Backbone encoder $\rightarrow$ MTL head variant $\rightarrow$ Evaluation metrics}
\label{fig:exp_pipeline_small}
\end{figure}
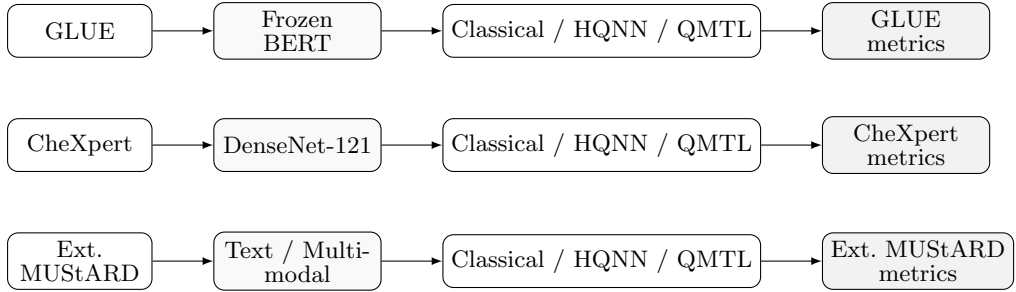

\subsubsection*{Implementation details}
\label{subsec:training_protocol}
All experiments were implemented in Python 3.11.13 using PyTorch 2.7.1 for classical components and PennyLane (v0.42.3) for the quantum components, and were executed with GPU acceleration on an NVIDIA A100 system. 
Because the datasets differ in scale, task structure, and supervision patterns, we tune hyperparameters for each dataset and model variant individually using grid search and report the final configurations in Appendix~\ref{appendix:methods}. 
Table~\ref{tab:exp_summary} provides a compact overview of the model variants considered per dataset, including the backbone configuration, head families, and the training protocol used.
\begin{table}[h]
\centering
\footnotesize
\caption{Summary of model variants, backbone configuration, and training procedures used across datasets. Detailed hyperparameters are provided in Appendix~\ref{appendix:methods}. 
}
\label{tab:exp_summary}
\renewcommand{\arraystretch}{1.5}
\begin{tabular}{m{1.8cm} m{2.9cm} m{3.2cm} m{3.4 cm}}
\toprule
Dataset &
Head variant &
Head detail &
Training method \\
\midrule
\addlinespace[0.4em]
\multirow{3}{*}{\makecell[l]{\\\textbf{GLUE}\\(Frozen BERT\\backbone)}}
& Classical MTL head 
& 1 linear layer per task
& \multirow{3}{*}{\makecell[l]{\\Task-sampled MTL (with\\ capped batches and \\LR scheduler)\\ Classification:\\ \code{BCEWithLogitsLoss},\\
\code{CrossEntropyLoss}, \\
Regression: \code{MSELoss}}} \\
\cmidrule{2-3}
& HQNN (4 or 10 qubits)
& VQC \citep{phukan_hybrid_2024} followed by 1 linear layer per task
& \\
\cmidrule{2-3}
& QMTL (10 qubits)
& QMTL head ($S_{1\to8} = 1$ \& $S_9=2$)
& \\
\midrule
\addlinespace[0.4em]
\multirow{3}{*}{\makecell[l]{\\\textbf{CheXpert} \\(DenseNet121 \\backbone)}}
& Classical MTL head
& 1 linear layer per task
& \multirow{3}{*}{\makecell[l]{\\Parallel tasks MTL\\Masked loss for missing \\labels;\\
Class-weighted focal loss\\
(Appendix Eq.~\ref{eq:focal_loss_app}) \\
3-class train, binary \\evaluations}}\\
\cmidrule{2-3}
& HQNN (4 or 10 qubits)
& VQC \citep{phukan_hybrid_2024} followed by 1 linear layer per task
& \\
\cmidrule{2-3}
& QMTL (10 qubits)
& QMTL head ($S_t = 2$ for all tasks)
& \\
\midrule
\addlinespace[0.4em]
\multirow{3}{*}{\makecell[l]{\textbf{Extended}\\\textbf{MUStARD}\\(Multimodal\\encoder \\backbone)}}
& Classical MTL head
& 1 linear layer per task
& \multirow{3}{*}{\makecell[l]{\\Parallel tasks MTL with \\weighted loss\\Binary: \code{BCEWithLogitsLoss} \&\\ Multiclass: \code{CrossEntropyLoss}}} \\
\cmidrule{2-3}
& HQNN (4 or 13 qubits) 
& VQC \citep{phukan_hybrid_2024} followed by 1 linear layer per task
& \\
\cmidrule{2-3}
& QMTL (13 qubits)
& QMTL head ($S_1 = 1$, $S_{2,3}=2,S_{4,5}=4$)
& \\
\bottomrule
\end{tabular}
\end{table}

\subsubsection*{Training protocols across labeling schemes }
We follow benchmark-standard supervised training protocols dictated by the label distribution and supervision structure of each dataset (Table~\ref{tab:dataset_table}). In particular, the datasets differ in how task labels are provided for each sample: single-task labeling, partially observed multi-task labels, or fully labeled samples. This determines whether a task-sampled, masked, or parallel multi-task training protocol is appropriate.

For \textbf{GLUE}, each sample provides a label for exactly one task, so we adopt task-sampled multi-task training where only the corresponding task head (and its associated QMTL components) is active per update. For this dataset, we repeat the experiments 5 times with 5 randomly generated seeds kept identical across the classical, both HQNNs, and our proposed quantum variants, and report the mean and standard deviation for each metric in bar chart format. This follows standard GLUE evaluation practice due to the high label imbalance in the data, keeps computation manageable for the quantum model, and ensures that any performance differences among the distinct model variants are not artifacts of a particular random initialization or task sampling. 

For \textbf{CheXpert}, labels are partially observed across tasks; we therefore use a masked loss so that only labeled tasks contribute to the objective for each sample.

For \textbf{Extended MUStARD}, each sample is fully labeled across tasks, enabling standard parallel multi-task optimization with a weighted sum of task losses. 

The experiments on CheXpert and Extended MUStARD are repeated for five non-overlapping folds of the dataset, and we report the mean and standard deviation (error bars) in bar chart format. We present our results and discuss our findings when comparing the multi-task performance of our proposed QMTL architecture against baselines across the three benchmark datasets. 

\subsection{Simulation Results}
In this section, we present our findings by firstly showing that our proposed architecture has significantly fewer parameters than the baseline considered in this work. Then, we show that the performance of QMTL is comparable to and, in some cases, even better than that of baselines while being more parameter-efficient.

\vspace{1em}
\noindent
\textbf{Parameter efficiency across datasets}\quad

\noindent
A key motivation for our proposed QMTL head module is to leverage the superposition property and the ability to encode in a high-dimensional latent space using a compact quantum circuit, i.e., one with minimal parameter count and quantum resources. 
To evaluate this, we compared the total number of trainable parameters for both our proposed model and competing baselines for multi-task learning on each of the datasets that we studied in this work.
\begin{figure}[h!]
    \centering
    \includegraphics[width=0.8\linewidth]{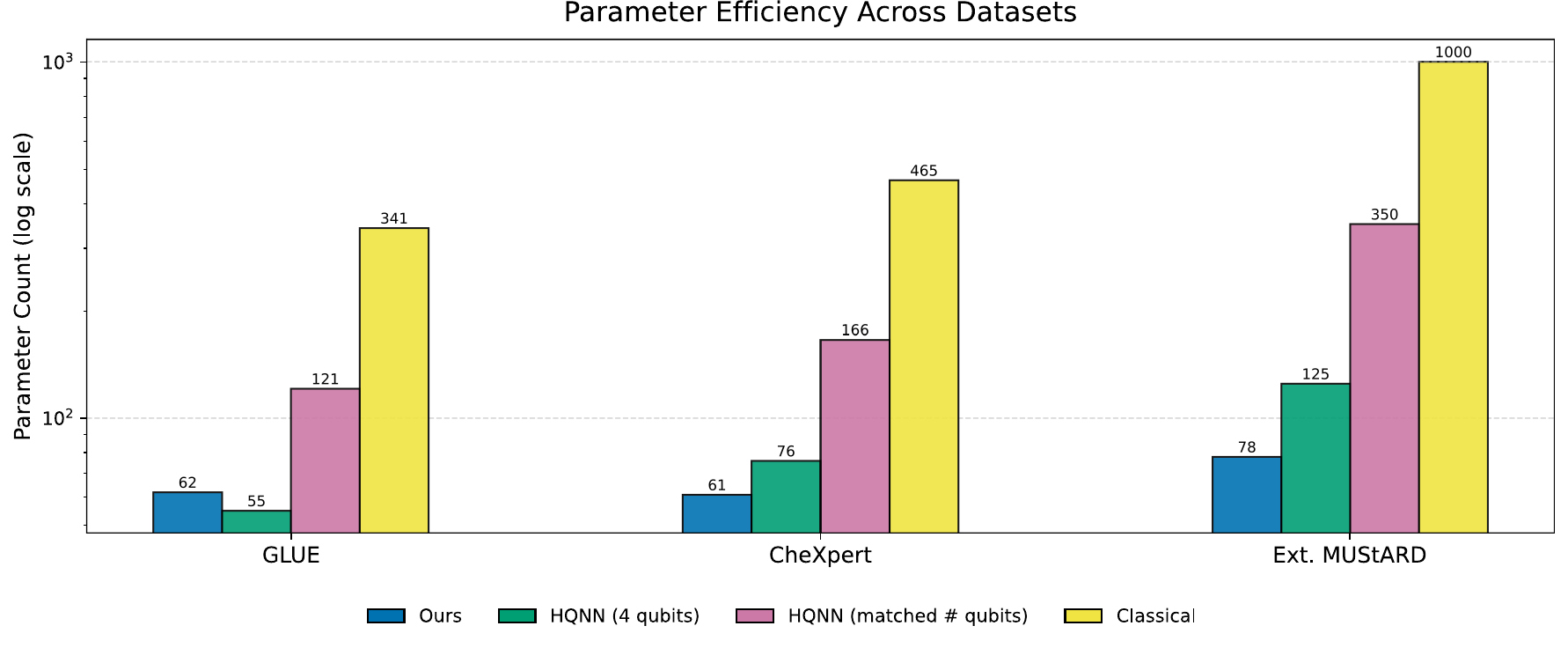}
    \caption{Total trainable parameters in task-specific heads across datasets. QMTL can achieve more than $12\times$ reduction compared to classical and 4-5$\times$ compared to HQNN (when scaled to equivalent qubit count)}
    \label{fig:param_efficiency}
\end{figure}
Fig.~\ref{fig:param_efficiency} shows that our QMTL architecture uses substantially fewer parameters across all datasets. The classical heads require $\mathcal{O}(T^2)$ parameters per task, accumulating to hundreds; QMTL's shared encoder and lightweight task-specific subcircuits require only $\mathcal{O}(T)$, reducing the parameter cost to tens. This supports the parameter scaling analysis stated in Theorem \ref{thm:param_scaling}. 

\subsubsection{Results on GLUE dataset}
\begin{figure}[h!]
    \centering
    \begin{subfigure}{\linewidth}
        \centering
        \includegraphics[width=\linewidth]{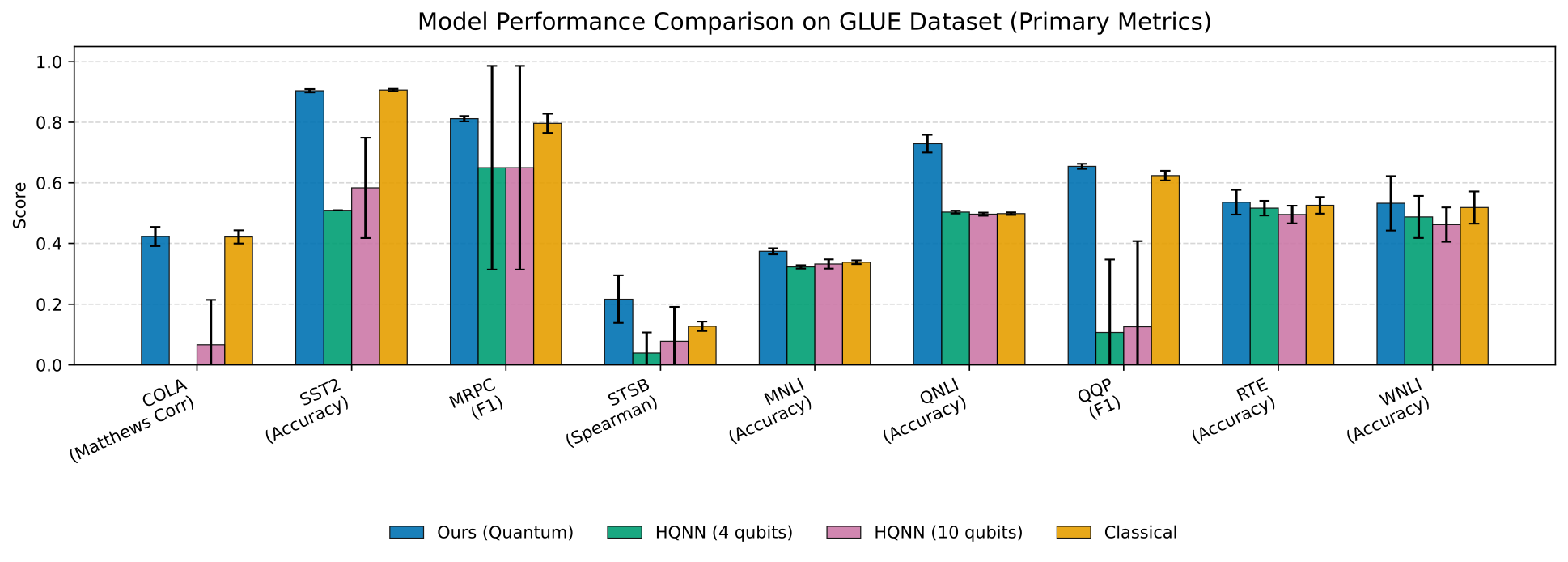}
        \caption{Performance with respect to primary metrics}
        \label{fig:glue_primary}
    \end{subfigure}
    \hfill
    \begin{subfigure}{\linewidth}
        \centering
        \includegraphics[width=\linewidth]{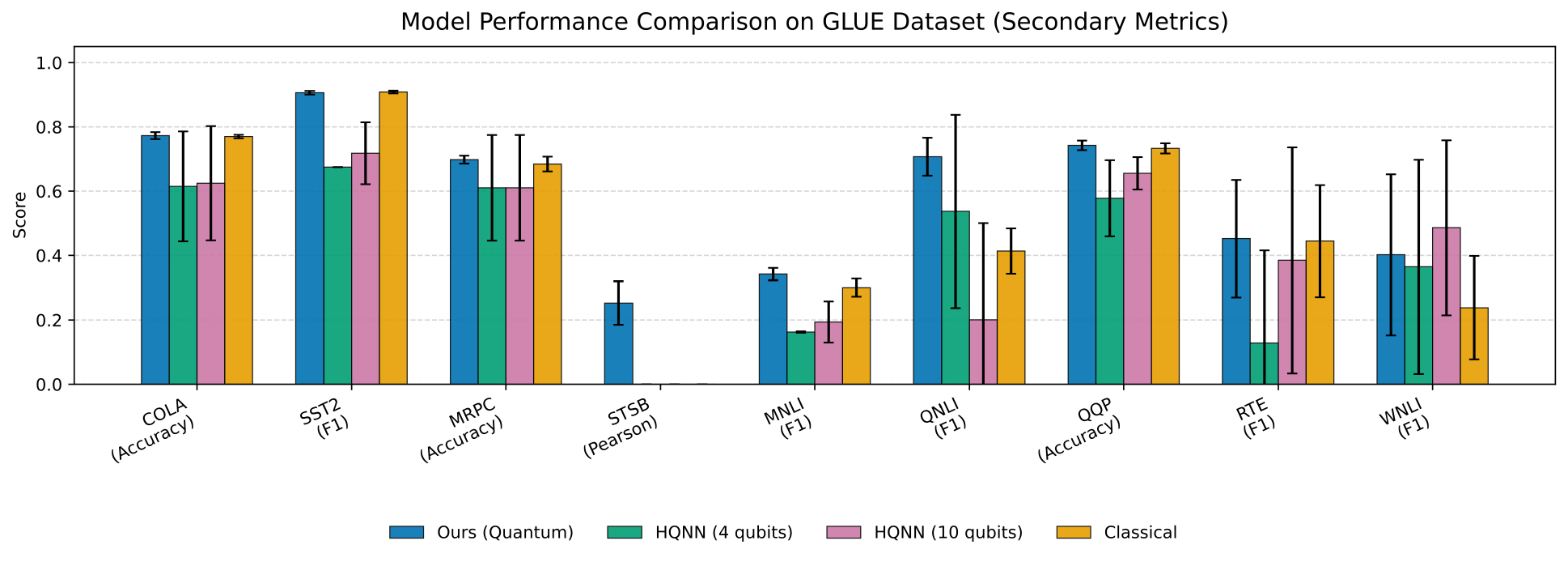}
        \caption{Performance with respect to secondary metrics}
        \label{fig:glue_secondary}
    \end{subfigure}
    \caption{Multi-task performance on the GLUE benchmark across four MTL head variants}
    \label{fig:glue_performance}
\end{figure}

On GLUE, QMTL achieves performance on par with the classical baseline across both primary and secondary metrics (Fig.~\ref{fig:glue_performance}). The quantum head matches or exceeds the classical head on several tasks (e.g., MRPC, QQP, QNLI) and remains within a small margin on others, while using substantially fewer trainable parameters. Both HQNN variants consistently underperform, likely due to the information bottleneck induced by the compressed projection into a small quantum block followed by classical post-processing layers. The regression task (STS-B) yields lower correlations across all variants, as expected given a shallow regression head and a limited label range; this limitation is not specific to QMTL but reflects a conservative, capacity-matched design for fair comparison.

\textit{\textbf{Remarks:} QMTL attains classical-level GLUE performance while outperforming HQNN baselines under comparable qubit budgets.}


\subsubsection{Results on CheXpert dataset}
\begin{figure}[h!]
    \centering
    \includegraphics[width=\linewidth]{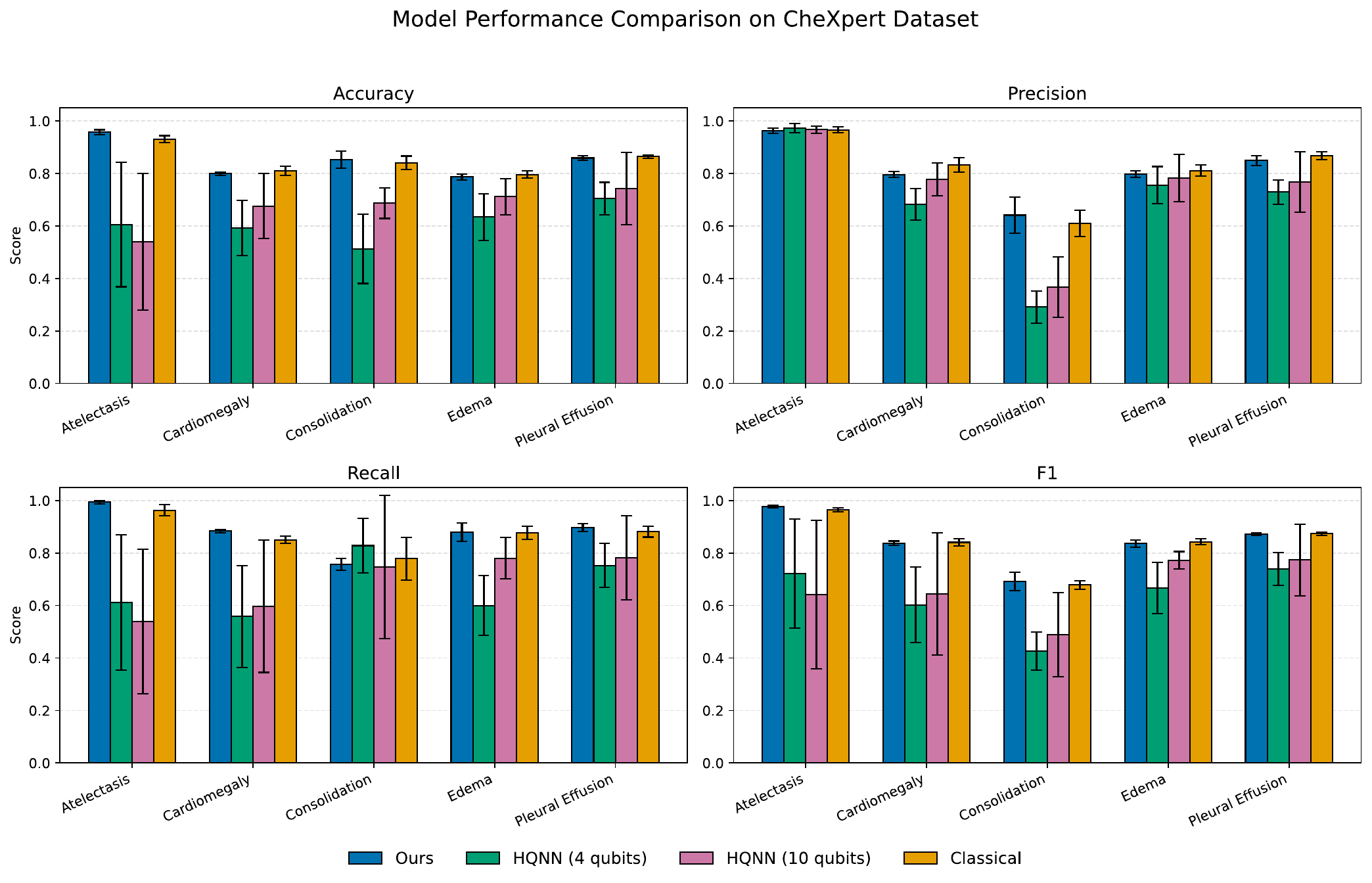}
    \caption{Multi-task performance on the CheXpert dataset across four MTL head variants}
    \label{fig:chexpert_performance}
\end{figure}

On CheXpert, QMTL is competitive with the classical head across the five pathology tasks and consistently outperforms both HQNN baselines (Fig.~\ref{fig:chexpert_performance}). QMTL achieves comparable Accuracy/F1 on several tasks (e.g., Atelectasis and Edema) and remains within fold-to-fold variability on the more challenging tasks (e.g., Consolidation). Increasing HQNN qubits (4$\rightarrow$10) does not yield a systematic improvement, suggesting that head design (task-local specialization with multi-observable readout) matters more than raw qubit count in this setting. Error bars across folds are comparable to the classical baseline, indicating that parameter efficiency does not come at the expense of stability.

\textit{\textbf{Remarks:} QMTL matches classical performance on key CheXpert tasks while exceeding HQNN baselines, and scaling HQNN qubits alone does not close the gap.}


\subsubsection{Results on Extended MUStARD dataset}
\begin{figure}[htbp!]
    \centering
    \includegraphics[width=\linewidth]{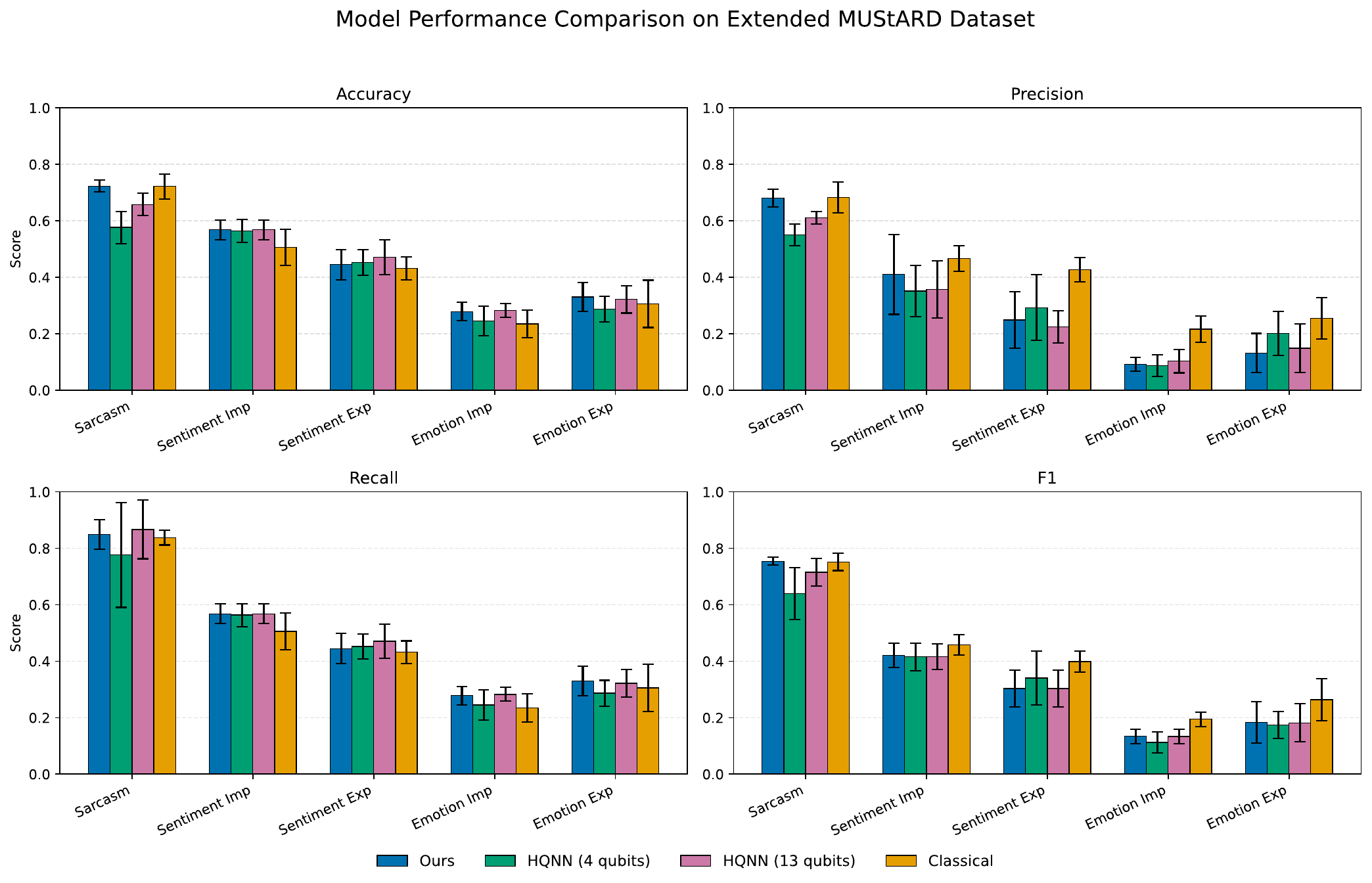}
    \caption{Multi-task performance on the Extended MUStARD dataset across four MTL head variants}
    \label{fig:mustard_performance}
\end{figure}

On Extended MUStARD, QMTL is consistently competitive and often strongest across the five tasks and four metrics (Fig.~\ref{fig:mustard_performance}). QMTL performs particularly well on sarcasm (high Accuracy/F1 and strong Recall), while remaining comparable to the classical head on sentiment and the harder 9-class emotion tasks. Both HQNN variants lag behind despite using comparable or more quantum resources; moreover, increasing the HQNN qubit count (4$\rightarrow$13) provides no clear benefit, reinforcing that the bottleneck lies in the HQNN method's projection and classical post-processing pipeline rather than the qubit count. Performance variability across folds remains moderate and comparable across methods.

\textit{\textbf{Remarks:} QMTL provides robust multi-task performance on multimodal MUStARD and consistently outperforms HQNN baselines even when HQNN uses more qubits.}


\subsubsection{Cross-benchmark summary across datasets}
Across GLUE, CheXpert, and extended MUStARD, the proposed QMTL head achieves performance comparable to the strongest classical multi-task heads while using substantially fewer trainable parameters. In all three domains, QMTL consistently outperforms the HQNN baselines under the same experimental protocol, and increasing the qubit count in HQNN (e.g., 4$\rightarrow$10/13) does not produce systematic gains. 

Notably, performance remains stable across heterogeneous tasks, with no evidence of systematic degradation in any single task, suggesting that the shared-encoding and task-local specialization design mitigates task interference that commonly arises in multi-task settings. 

Taken together, these empirical findings support the parameter efficiency analysis in Section~\ref{subsec:param_efficiency}: the proposed quantum head achieves linear-in-$T$ parameter scaling while remaining competitive with classical heads whose parameter count grows quadratically with the number of tasks.


\section{Ablation studies and noisy simulations}\label{sec:ab_studies}
In this section, we conduct further experiments to support the parameter-efficiency claim of our model and to show the importance of entanglement on our circuit design. Then, to better understand the performance limits under noisy conditions, we deploy our proposed QMTL architecture on noisy simulators with increasing noise levels.
\subsection{Effect of parameter budget on model performance}
\label{subsec:ab_param}
We conduct an ablation study to analyze the effect of increasing the number of parameters in our QMTL architecture as described in Section \ref{sec:our_model}. We do this by varying the quantum encoding depth $L$ and the task-specific head depth $L_h$. Fig.~\ref{fig: param_sweep} reports mean Accuracy and F1-score across the five CheXpert tasks, with shaded regions indicating the standard deviation over five folds. 

\begin{figure}[ht]
    \centering
    \includegraphics[width=\linewidth]{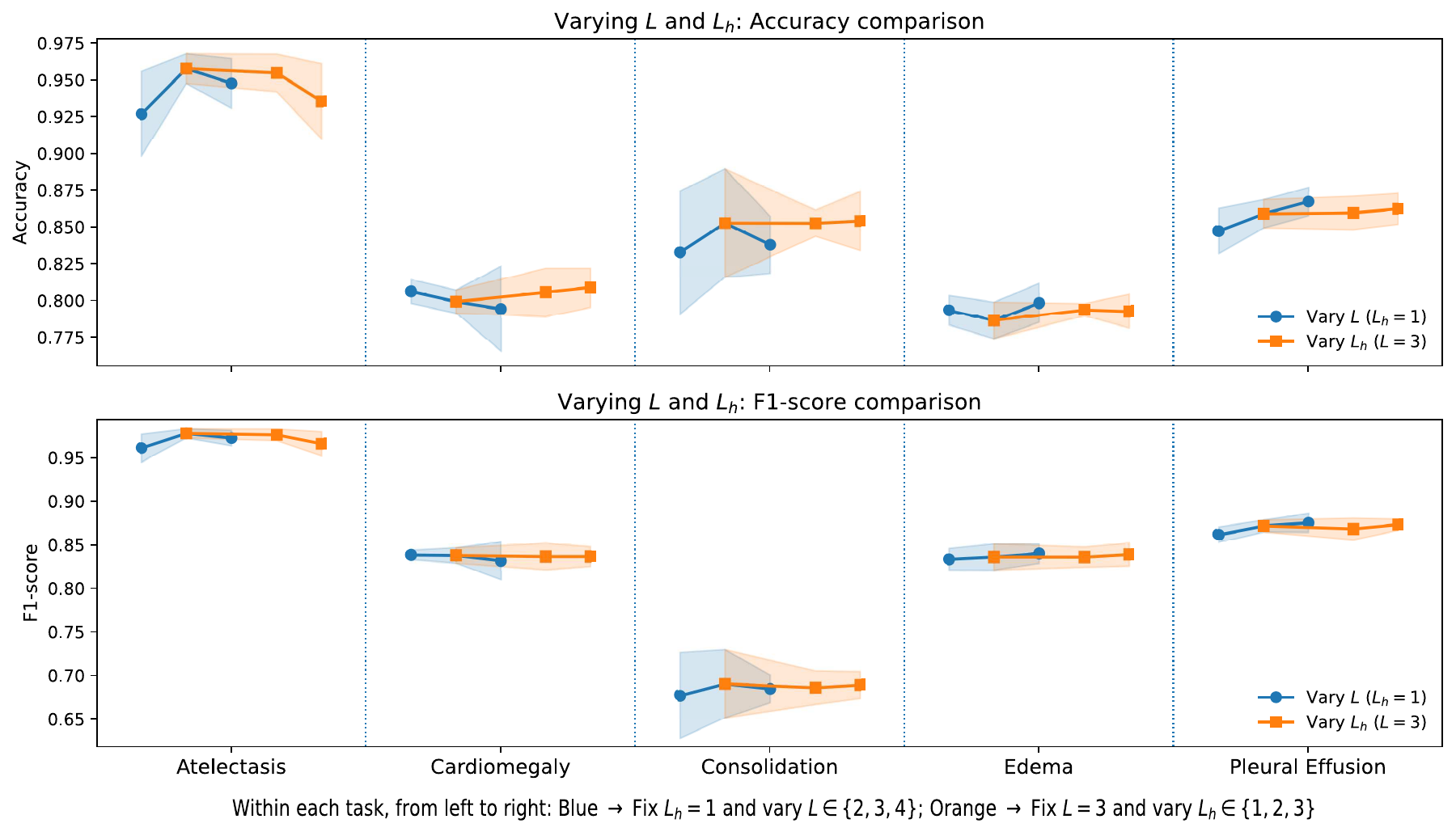}
    \caption{
    Effect of increasing parameter cost on model performance. The \textcolor{blue}{blue} linegraph represents the trend observed when varying the encoding layer depth $L \in \{2,3,4\}$, while keeping the task-specific subcircuit depth $L_h=1$. The \textcolor{orange}{orange} linegraph shows the trend when we vary the task-specific subcircuit depth $L_h \in \{1,2,3\}$ and keep the encoding layer depth $L=3$. A moderately deep encoding subcircuit coupled with shallow task-specific subcircuits provides the best performance, highlighting the best balance between expressivity of the shared quantum representation and parameter efficiency.}
    \label{fig: param_sweep}
\end{figure}

When varying the encoding depth $L\in \{2,3,4\}$ while fixing $L_h=1$, we observe a consistent improvement from $L=2$ to $L=3$ across all tasks and metrics, indicating that deeper shared encodings enable quantum representations that generalize better across tasks, reducing the need for deep task-specific heads. Increasing the depth further to $L=4$ yields marginal or no additional gains, suggesting saturation of representational capacity and highlighting the importance of balancing expressivity with parameter efficiency in the Noisy Intermediate-Scale Quantum (NISQ) regime \citep{Preskill_2018}. 

In contrast, when fixing $L=3$ and increasing the task-specific depth $L_h$, performance remains largely unchanged across tasks. This indicates that a single task-specific quantum layer is sufficient to adapt the shared representation to individual objectives in our design. Also, deeper task heads introduce additional parameters without improving generalization. Hence, this ablation study indicates a diminishing-returns effect: once the quantum head is moderately expressive, further increases in the number of parameters do not yield substantial performance gains. 

Based on these findings, we set the hyperparameters $L=3$ and $L_h=1$ in our experiments, as this configuration provided the best trade-off between performance, stability, and parameter efficiency. These results empirically support the design principle of QMTL, where most of the learning capacity is allocated to the shared quantum encoding, while task-specific components remain lightweight.

\textit{\textbf{Remarks:} This study demonstrates that performance gains in QMTL primarily stem from increasing shared quantum encoding depth rather than task-specific capacity, validating our choice of a moderately deep shared encoder ($L=3$) paired with minimal task-specific heads ($L_h=1$).}

\subsection{Effect of entanglement on performance}
\label{subsec:ab_ent}

To isolate the effect of entanglement in the shared encoder, we compare two variants of the proposed quantum head while keeping the backbone network, training protocol, and task-specific subcircuits fixed. In both settings, we use the same number of qubits $Q$, encoding depth $L$, and number of trainable single-qubit rotations per layer $k_\theta$ in the task-specific subcircuits (Section~\ref{subsec:our_model}). We vary only whether the shared encoding stage includes entangling gates, so any performance differences can be attributed to the presence of entangling gates.
In our proposed QMTL architecture (Fig.~\ref{fig:our_qmtl_circuit}), entanglement between qubits is introduced through fixed CNOT gates in the task-independent subcircuit (shared encoder). These two-qubit gates are not trainable and therefore do not increase the number of learnable parameters; instead, they control how information is distributed across qubits. To assess the role of entanglement, we compare the full model against a non-entangled variant in which all CNOT gates are removed from the shared encoder. In this setting, the encoder consists solely of local, parameterized single-qubit rotations that prepare a product quantum state. While the amplitudes of this state still depend nonlinearly on the latent features $\Vec{Z}$, no multi-qubit correlations are introduced.

We evaluate both variants on the CheXpert dataset and report Accuracy and F1-score using five-fold cross-validation. Results are summarised in Fig.~\ref{fig:entanglement}.

\begin{figure}[ht]
    \centering
    \includegraphics[width=\linewidth]{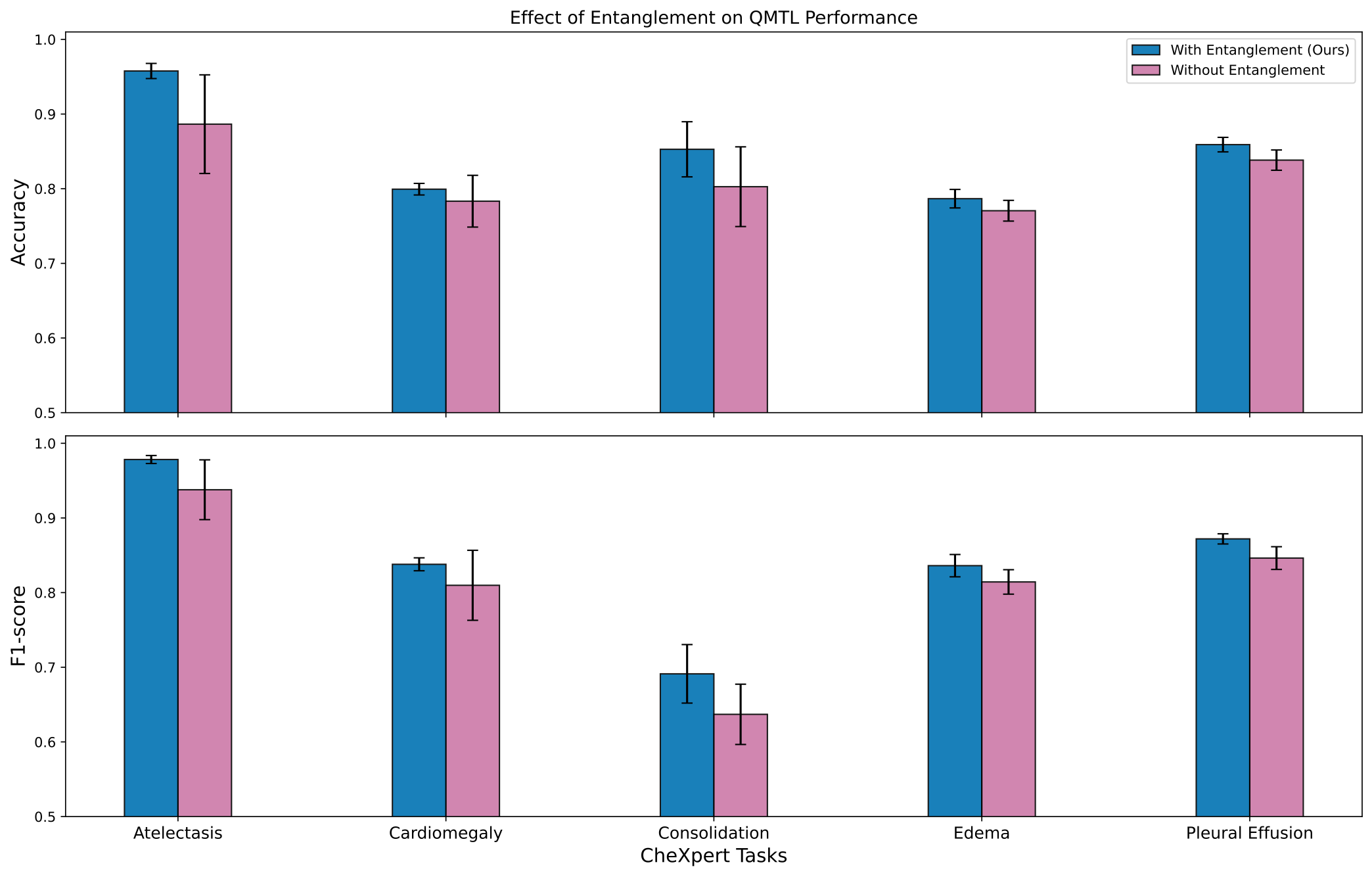}
    \caption{Effect of entanglement in the shared encoder on QMTL performance across CheXpert tasks. These results suggest that entanglement in the shared encoder can improve the multi-task performance of QMTL.}
    \label{fig:entanglement}
\end{figure}

Fig.~\ref{fig:entanglement} compares QMTL with an entangled shared encoder against an otherwise identical variant in which all entangling CNOT gates are removed from the encoder. Across both Accuracy and F1-score, incorporating entanglement yields a consistent improvement: the entangled variant achieves higher mean performance for every task, indicating that entangling operations in the shared stage systematically strengthen downstream multi-task predictions.

Across folds, the entangled encoder also exhibits comparable or, in several cases, slightly reduced variability relative to the non-entangled encoder, suggesting that the shared entangling structure can improve reliability in addition to average performance. Importantly, these gains do not arise from increased trainable capacity: the entangling gates are fixed (non-trainable), and the task-specific subcircuits are unchanged, so both variants have the same number of learnable parameters. Overall, the results support the role of encoder entanglement as a practical mechanism for enriching the shared representation in a multi-task setting, improving performance without sacrificing parameter efficiency.

\textit{\textbf{Remarks:} Adding fixed entangling gates in the shared encoder consistently improves QMTL performance on CheXpert without increasing the trainable parameter count, reinforcing entanglement as a useful contributor to parameter-efficient shared representations.}

\subsection{Noise Sensitivity under Depolarizing Models}
\label{subsection:noise_sim}

To study robustness to hardware-like noise in a controlled setting, we evaluate our proposed QMTL architecture under depolarizing noise using Qiskit's \code{AerSimulator}. Independent depolarizing channels are applied to both single-qubit and two-qubit gates with error probabilities $p_1$ and $p_2$, respectively. For simplicity, we set $p_1 = p_2 = p$ and vary $p \in \{0.01, 0.05, 0.1, 0.2\}$.

Inference is performed using the same fixed trained checkpoint obtained from the first fold of the CheXpert experiment in Section~\ref{sec:exp_feasibility}, thereby isolating the effect of execution noise from training variability.
\begin{figure}[h!]
    \centering
    \includegraphics[width=\linewidth]{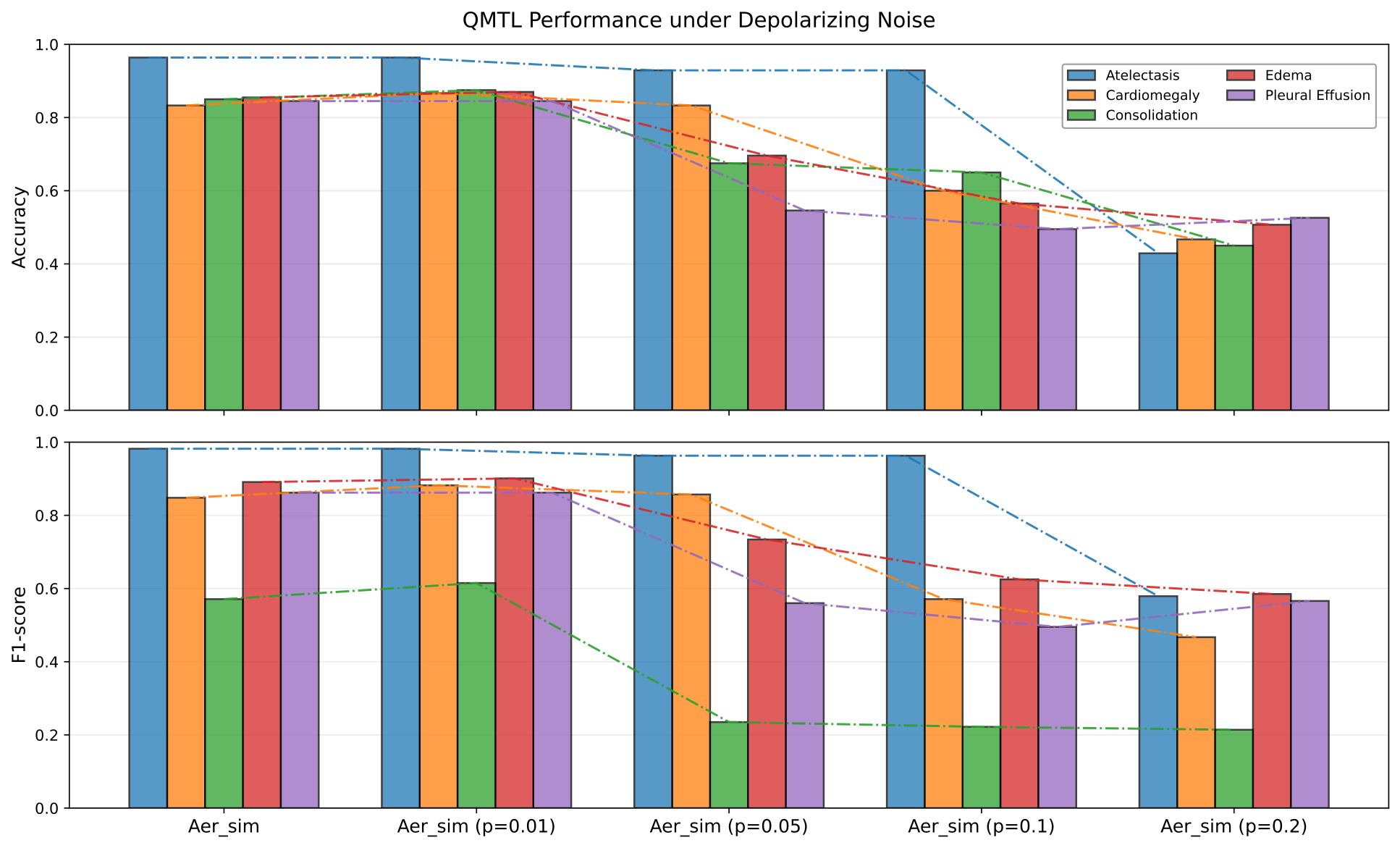}
    \caption{
    Multi-task performance of QMTL on the CheXpert dataset under increasing depolarizing noise levels in \code{AerSimulator}. Accuracy (top) and F1-score (bottom) are reported across CheXpert tasks using identical trained weights.
    }
    \label{fig:noise_results}
\end{figure}
Fig.~\ref{fig:noise_results} shows that performance remains close to the ideal simulator at low noise levels ($p=0.01$), but degrades progressively as $p$ increases, with F1-score typically exhibiting greater sensitivity than accuracy. This behavior is consistent with the accumulation of gate errors and reduced state fidelity in deeper circuits.

\textit{\textbf{Remarks:} Overall, these results indicate that QMTL performance degrades as expected under increasing depolarizing noise and provide a controlled reference point for interpreting real-device behavior in the next section.}

\section{Hardware Deployment on IBM Quantum Devices}
\label{sec:real_device}
To evaluate practical feasibility, we deploy the trained QMTL model on two IBM Quantum backends: \code{ibm\_fez} and \code{ibm\_boston}. Inference is performed using a fixed trained checkpoint obtained from the first fold of the CheXpert experiment in Section~\ref{sec:exp_feasibility}. No retraining is conducted; only the execution backend is varied. All circuit parameters and shot counts are kept identical to the simulator configuration to isolate the effect of hardware noise.

\begin{figure}[h!]
    \centering
    \includegraphics[width=0.7\linewidth]{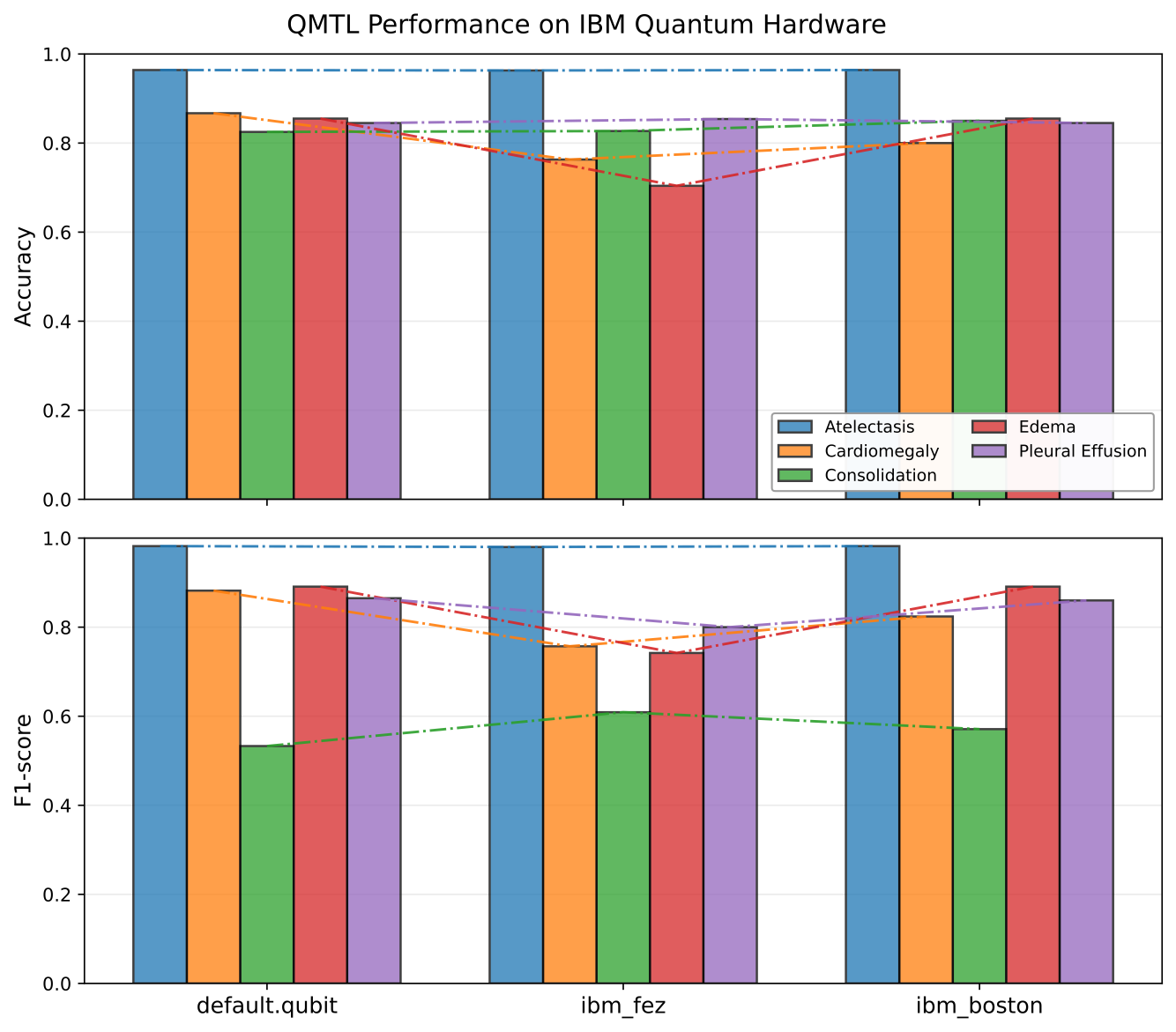}
    \caption{
    Multi-task performance of QMTL on IBM Quantum devices (\code{ibm\_fez}, \code{ibm\_boston}) compared to noise-free statevector simulation (\code{default.qubit}). Accuracy (top) and F1-score (bottom) are reported across the CheXpert tasks using the same trained weights. \code{ibm\_boston} performs closer to the ideal simulation than \code{ibm\_fez}
    }
    \label{fig:hardware_results}
\end{figure}

Fig.~\ref{fig:hardware_results} shows moderate degradation in performance on real hardware relative to ideal simulation, with variation across devices. In particular, \code{ibm\_boston} achieves performance closer to the ideal baseline than \code{ibm\_fez}. This is consistent with the reported median two-qubit error rates at the time of execution (approximately $1.15\times 10^{-3}$ for \code{ibm\_boston} and $2.55\times 10^{-3}$ for \code{ibm\_fez}). In several tasks, performance on \code{ibm\_boston} is comparable to the noisy simulator at $p=0.01$ and to the noise-free results, further supporting the consistency between controlled-noise modeling and observed hardware behavior.

\textit{\textbf{Remarks:} Despite hardware-induced degradation, QMTL retains meaningful multi-task performance across tasks, demonstrating that the proposed architecture can be executed on current devices with reasonable fidelity.}

\section{Discussion}\label{sec:discussion}
This section synthesizes the main findings of the paper and relates our parameter-efficiency analysis (Section~\ref{subsec:param_efficiency}) to the empirical results in Sections~\ref{sec:exp_feasibility}, \ref{sec:ab_studies}, and \ref{sec:real_device}.

\vspace{1em}
\noindent
\textbf{Performance of QMTL compared to baselines.}\quad
In Section~\ref{sec:exp_feasibility}, we evaluated QMTL on three multi-task benchmarks (GLUE, CheXpert, and extended MUStARD), spanning text, medical imaging, and multimodal inputs. Across these settings, QMTL is competitive with strong classical multi-head baselines and consistently outperforms HQNN-style baselines that compress high-dimensional features into a very small quantum latent space, followed by classical post-processing. These results suggest that a fully quantum multi-head module can support multi-task learning without relying on additional classical layers to map quantum outputs to logits, as in~\citep{phukan_hybrid_2024}. Notably, we do not observe systematic degradation of any single task relative to others, suggesting that localized task adaptation within a shared quantum representation can help limit task interference effects commonly associated with negative transfer in multi-task learning.

A recurring limitation of HQNN in our comparisons is the information bottleneck induced by explicit dimensionality reduction prior to the quantum circuit (e.g., projecting to a 3-dimensional input and then reading out only a small number of expectation values). While classical layers after the quantum block can reparameterize these outputs, information that is not represented in the quantum layer is difficult to recover. In contrast, QMTL avoids this bottleneck by (i) matching the circuit input dimension to a larger shared latent representation via qubit-efficient encoding, and (ii) using task-local subcircuits and multi-observable readout so that each task can extract an appropriately sized logit vector without introducing a large classical post-processing head. Empirically, this design yields stronger and more consistent performance than HQNN variants, even when HQNN qubit counts are increased to match QMTL. 

\vspace{1em}
\noindent
\textbf{Parameter scaling, efficiency, and empirical stability.}\quad
A primary motivation for QMTL is to reduce the number of trainable parameters in the multi-task head while retaining competitive accuracy. Under the classical multi-head baseline considered in Section~\ref{subsec:param_efficiency}, head parameter counts exhibit an overall quadratic dependence on the number of tasks, $\mathcal{O}(T^2)$. In contrast, QMTL uses qubit-efficient encoding and lightweight task-specific subcircuits, yielding a head-parameter scaling of $\mathcal{O}(T)$ with fixed circuit hyperparameters, while remaining competitive with the classical counterpart. Importantly, the empirical variability across folds/seeds remains comparable to the classical baseline, suggesting that parameter reduction does not come at the expense of stability in these experiments.

\vspace{1em}
\noindent
\textbf{Task difficulty and conservative head capacity.}\quad
The most challenging settings in our benchmarks (e.g., regression in STS-B and high-cardinality emotion classification in extended MUStARD) highlight that lightweight task-local heads, quantum or classical, can be capacity-limited. 
Because we intentionally constrained head capacity to enable controlled comparisons, the gaps observed on these tasks (Section~\ref{sec:exp_feasibility}) are most plausibly explained by task difficulty and label/data properties rather than a failure mode unique to QMTL.

\vspace{1em}
\noindent
\textbf{Ablation insights: depth and entanglement.}\quad
Sections~\ref{subsec:ab_param} and \ref{subsec:ab_ent} indicate that increasing circuit parameters (e.g., via larger encoding depth $L$ or task-specific depth $L_h$) does not monotonically improve performance; these quantities behave as hyperparameters and should be tuned for each setting. In addition, removing entangling CNOT gates from the shared encoder reduces performance, suggesting that encoder entanglement can improve the shared representation by introducing non-separable correlations between qubits.

\vspace{1em}
\noindent
\textbf{Hardware gap and noise sensitivity.}\quad
Beyond noise-free simulation, we assessed QMTL under depolarizing-noise simulation and on IBM Quantum hardware (Sections~\ref{subsection:noise_sim} and ~\ref{sec:real_device}). As expected, performance degrades relative to the ideal simulation and declines with increasing noise. Nevertheless, QMTL retains useful multi-task performance on real backends at the tested circuit sizes, and appropriately parameterized noisy simulation provides a practical proxy for hardware behavior in this regime.

\section{Conclusion}\label{sec:concl}

We introduce a parameter-efficient quantum multi-task learning (QMTL) head that combines a shared quantum encoding stage with lightweight task-local specialization and multi-observable readout. Across three multi-task benchmarks (GLUE, CheXpert, and extended MUStARD), QMTL achieves performance competitive with common baselines from prior work in multi-task learning. We achieve this with parameter scaling of only $\mathcal{O}(T)$, compared to $\mathcal{O}(T^2)$ in standard classical baselines. Further hardware and noisy simulator experiments demonstrate that QMTL can be successfully deployed on actual noisy devices and noisy simulators, while retaining meaningful multi-task performance on real backends at the circuit depth and qubit counts used in this study. 

Nevertheless, we identify some limitations in this work. First, the quantum heads are intentionally shallow and task-local to maintain a controlled parameter budget, which can limit expressivity for regression tasks and high-cardinality classification, particularly compared with deeper classical heads. Second, while we include evaluations under noisy simulation and on real backends for a subset of experiments, our results do not yet constitute a comprehensive hardware study across devices, noise models, and circuit optimizations. Third, our parameter-efficiency claims are made relative to the \emph{multi-task head} component (classical vs.\ HQNN vs.\ QMTL) rather than the full end-to-end pipeline: in all benchmarks, the classical backbone encoders (e.g., BERT, DenseNet121) dominate the total parameter count. Our focus is therefore on isolating the MTL head to assess whether task specialization can be achieved with markedly fewer trainable head parameters while remaining competitive. 

More broadly, our work suggests that parameter-efficient quantum circuit design could be explored as a complementary avenue to reduce the effective number of trainable (or active) parameters in large classical models, including large‑scale pretrained models, where head‑level parameter efficiency is a practical concern. This motivation stems from quantum feature maps' ability to exploit superposition and access high-dimensional Hilbert-space representations with compact parameterizations. Architectures similar to QMTL could also be investigated in a mixture-of-experts setting, where the goal is to scale model capacity while keeping the number of active parameters low for faster inference. Finally, robust deployment on near-term hardware will require extending QMTL with noise-aware objectives, measurement allocation strategies, and error-mitigation techniques; in this work, we only demonstrate hardware feasibility at inference time on a restricted subset of data.

\backmatter
\bmhead{Acknowledgements}
This work was supported by the University of Melbourne through the establishment of an IBM Quantum Network Hub at the University.

\section*{Declarations}
\begin{itemize}
\item Funding: No funding was received to assist with the preparation of this manuscript.
\end{itemize}








\begin{appendices}

\section{\label{appendix:datasets}Datasets, data preprocessing, and Evaluation Metrics}
We describe the three datasets used in this work to evaluate and compare the performance of our proposed quantum circuit as a multi-task head module against common classical and hybrid quantum–classical approaches. 

Table~\ref{tab:dataset_table} summarizes the datasets considered in this work, including modality, number of tasks, task type, dataset size, and labeling scheme. 
The labeling scheme is included because it determines the training protocol used for each benchmark (Section~\ref{subsec:training_protocol}). 

To evaluate performance across diverse tasks, we use commonly used metrics to ensure a fair and meaningful comparison with prior work. These metrics are as described in Appendix \ref{subsec:metrics}. 
\begin{table}[h]
\centering
\caption{Summary of datasets considered in this paper for comparing our proposed QMTL architecture against baselines.}
\setlength{\tabcolsep}{5pt}
\begin{tabularx}{\linewidth}{|Y|Y|c|Y|c|Y|}
\hline
Dataset & Modality & \# tasks & Task Type & \# samples & Labeling scheme \\
\hline
GLUE Benchmark & Text (NLP) & 9 & Binary, Multiclass, Regression & $\leq 300k$ per task & Exactly 1-labeled task per sample \\
\hline
CheXpert & Chest X-ray (image) & 5 & Binary & $20$k total & $\geq 1$ labeled task per sample \\
\hline
Extended MUStARD & Multimodal (audio, text, visual) & 5 & Binary, Multiclass & $690$ total & Fully labeled \\
\hline
\end{tabularx}
\label{tab:dataset_table}
\end{table}

\subsection{GLUE Benchmark Dataset}
In the domain of Natural Language Processing (NLP), an extensively used dataset for multi-task learning is the General Language Understanding Evaluation (GLUE) Benchmark Dataset. It consists of texts from various domains, including Wikipedia, social question-answer websites, and news articles. We consider this dataset in this work to benchmark the performance of our proposed model against commonly used multi-task classical and hybrid quantum neural network-based head structures. We describe these structures later in Appendix \ref{appendix:models}.  

The GLUE benchmark consists of nine tasks spanning single-sentence classification, similarity and paraphrase detection, and natural language inference. The single-sentence tasks include CoLA, which evaluates linguistic acceptability, and SST-2, a binary sentiment classification task. The similarity and paraphrase tasks include MRPC, QQP, and STS-B, which assess whether two sentences are semantically equivalent or, in the case of STS-B, predict a graded similarity score. The inference tasks comprise MNLI, QNLI, RTE, and WNLI, which require determining textual entailment or other logical relationships between sentence pairs. More details about the GLUE Benchmark dataset can be found in \citep{gluebenchmark}. We retrieve the datasets from the HuggingFace dataset repository (\href{https://huggingface.co/datasets/nyu-mll/glue}{nyu-mll/glue}), and adhere to the official GLUE task formats and train/validation splits for all tasks. For MNLI, which provides matched (in-domain) and mismatched (cross-domain) validation subsets, we concatenate these to form a single validation set.

For each dataset, preprocessing is performed through a unified loading pipeline which first extracts either a single text input (CoLA, SST-2) or a text pair following the standard GLUE schema: \code{premise}–\code{hypothesis} for MNLI task, \code{question}–\code{sentence} for QNLI, and paraphrase and similarity tasks are represented using \code{sentence1}–\code{sentence2} for MRPC, STS-B, RTE, WNLI, and \code{question1}–\code{question2} for QQP.
All tasks are tokenized using BERT's WordPiece tokenizer \code{bert-base-uncased} with a maximum sequence length of $T=128$, applying maximum length padding and truncation. Labels for classification tasks are kept as provided, while STS-B similarity scores are normalized to the range $[0,1]$ by dividing by 5.0, matching common GLUE regression practice and the internal expectations of our model heads.

After tokenization, all examples are converted into a uniform fixed-length representation (tokens) suitable for transformer-based models and grouped into training and validation splits. Since the test labels are not publicly available, all evaluations are carried out on the validation sets. For each task, we construct separate data loaders for training and validation, each with a batch size of 128. In the multi-task learning setting, the nine tasks are merged into a unified dataset, allowing the model to be trained across all GLUE tasks within a single optimization framework.

\subsection{CheXpert Dataset}
In clinical diagnosis, chest radiographs have been used to detect various pathologies. CheXpert \citep{CheXpertStanford2019} is a dataset containing 224,316 chest radiographs from 65,240 patients collected at Stanford Hospital \citep{irvin_chexpert_2019}. The authors proposed an automatic labeler to annotate each radiograph image with the presence, absence, or uncertain labels for 14 chest-related pathologies (e.g., Atelectasis, Edema, Pleural Effusion). In this work, we consider the downsampled version of this dataset (CheXpert-v1.0-small) available at \citep{CheXpertKaggle2022}, and we select 20,000 images from it: 10,000 from male patients and 10,000 from female patients.

As explained in \citep{irvin_chexpert_2019}, we consider the five \textit{competition tasks} selected by three board-certified radiologists based on clinical importance and prevalence in the validation set. Hence, in the subset we consider, each image is annotated for five thoracic pathologies (\textit{competition tasks}): Atelectasis, Cardiomegaly, Consolidation, Edema, and Pleural Effusion, with labels drawn from the original CheXpert schema: positive (1.0), negative (0.0), uncertain (-1.0), and blank (unlabeled). During preprocessing, all images were converted to grayscale, resized to 320 x 320 pixels, and normalized using ImageNet statistics to ensure compatibility with DenseNet-based encoders.
Label values were mapped to discrete class indices $\{0,1,2\}$ corresponding to uncertain, negative, and positive, respectively, while blank entries were assigned the value -100 and ignored during loss computation. We detail the training method for multiple models on this dataset in Section \ref{subsubsec:chexpert_train}.
This masking strategy enables training with partially labeled samples: for each image, only tasks with valid labels contribute to the loss, while unlabeled tasks are excluded.
A light data-augmentation pipeline comprising random horizontal flips $(p=0.3)$ was applied to improve generalization.

We frame the classification of multiple pathologies from a single chest radiograph as a multi-task learning problem, with each pathology treated as an independent prediction task. While prior works on Chest X-ray imaging, e.g.,  \citep{okolo_cln_2025}, extend this paradigm to include localization, we deliberately restrict our scope to classification. Bounding box annotations are scarce, available only for a small subset of the chest x-ray images (e.g., National Institute of Health’s (NIH) ChestX-ray8 dataset \citep{wang_chestx-ray8_2017}), and are absent altogether in CheXpert. Including localization would therefore have significantly reduced the available training data and compromised comparability with existing classification-focused baselines. Moreover, our focus is on evaluating parameter-efficient multi-task representation learning across diseases, rather than interpretability via spatial localization.

\subsection{Extended MUStARD Dataset}
The Multimodal Sarcasm Detection Dataset (MUStARD), compiled from popular television shows, was originally proposed by Castro et al. \citep{castro_towards_2019} for sarcasm detection. Later, Chauhan et al. released an extended version of this dataset that also includes sentiment and emotion (both implicit and explicit) classes \citep{chauhan_sentiment_2020}. The annotated extended MUStARD dataset comprises conversational audio-visual utterances. It contains 690 samples, each associated with textual, acoustic, and visual features, as well as contextual information from preceding dialogue turns. In this work, we used the pre-extracted feature representations provided by the authors of the extended MUStARD dataset, encompassing utterance-level embeddings for all three modalities. Readers are referred to the original paper~\citep{chauhan_sentiment_2020} for the exact preprocessing and feature extraction procedures. In the hybrid-quantum domain, Phukan et al. \citep{phukan_hybrid_2024} proposed a HQNN for multi-task classification using the extended MUStARD dataset, and we use their model architecture as one of the baselines in this work.

The extended MUStARD dataset includes five supervised tasks: \textit{sarcasm detection}, \textit{sentiment (implicit and explicit)}, and \textit{emotion (implicit and explicit)}. 
The \textit{sarcasm detection} task is a binary classification problem (sarcastic vs. non-sarcastic). 
\textit{Sentiment classification} is a 3-class task that involves predicting the explicit sentiment expressed in the utterance (positive, negative, or neutral) and the implicit sentiment inferred from contextual cues. 
The \textit{emotion recognition} tasks span nine emotion categories--disgust, excitement, sadness, neutral, frustration, happiness, fear, surprise, and anger--in both implicit and explicit settings. In this work, we aim to compare the performance of our proposed quantum MTL head module in predicting labels for each task against baseline models. 

\subsection{Evaluation Metrics}
\label{subsec:metrics}
Given the diversity of task formulations across datasets, we adopt task-specific evaluation metrics rather than enforcing a single metric across all experiments. 
This choice ensures comparability with prior work, as each benchmark traditionally reports performance using different quantitative measures. 
Table~\ref{tab:datasets_metrics} summarizes the tasks and metrics used for each dataset. 

\begin{table}[h]
\centering
\small
\caption{Summary of datasets, tasks, and evaluation metrics used across experiments. Symbols denote task types: $\dagger$ = Binary classification, $\ddagger$ = 3-class classification, $\clubsuit$ = 9-class classification, $\triangle$ = Regression.
Abbreviations: Acc = Accuracy, P = Precision, R = Recall, F1 = F1-score, Matthews\_corr = Matthews Correlation, Spearman = Spearman correlation, Pearson = Pearson correlation. For GLUE tasks, the primary metric is listed first.}
\label{tab:datasets_metrics}
\begin{tabular}{lll}
\toprule
\textbf{Dataset} & \textbf{Task} & \textbf{Metrics} \\
\midrule
\multirow{5}{*}{CheXpert}
 & Atelectasis$^{\dagger}$ & Acc, P, R, F1 \\
 & Cardiomegaly$^\dagger$ & Acc, P, R, F1 \\
 & Consolidation$^\dagger$ & Acc, P, R, F1 \\
 & Edema$^\dagger$ & Acc, P, R, F1 \\
 & Pleural Effusion$^\dagger$ & Acc, P, R, F1 \\
\midrule
\multirow{9}{*}{GLUE}
 & COLA$^\dagger$ & Matthews\_corr, Acc \\
 & SST-2$^\dagger$ & Acc, F1 \\
 & MRPC$^\dagger$ & F1, Acc \\
 & STS-B$^\triangle$ & Spearman, Pearson \\
 & MNLI$^\ddagger$ & Acc, F1 \\
 & QNLI$^\dagger$ & Acc, F1 \\
 & QQP$^\dagger$ & F1, Acc \\
 & RTE$^\dagger$ & Acc, F1 \\
 & WNLI$^\dagger$ & Acc, F1 \\
\midrule
\multirow{5}{*}{Extended MUStARD}
 & Sarcasm$^\dagger$ & Acc, P, R, F1 \\
 & Sentiment (Implicit)$^\ddagger$ & Acc, P, R, F1 \\
 & Sentiment (Explicit)$^\ddagger$ & Acc, P, R, F1 \\
 & Emotion (Implicit)$^\clubsuit$ & Acc, P, R, F1 \\
 & Emotion (Explicit)$^\clubsuit$ & Acc, P, R, F1 \\
\bottomrule
\end{tabular}
\end{table}
\section{\label{appendix:models}Baseline models from prior work}
In this section, we describe the three models obtained from multi-task learning literature from both the classical and hybrid quantum machine learning domains. We further explain how we adapt their backbones/feature extractors for different MTL head modules, providing a comprehensive method for benchmarking the performance of our proposed model across various datasets.

\subsection{\label{subsubsec: nlp_model}Baseline 1: Natural language processing (NLP) based multi-task learning on GLUE dataset}
We propose this model to perform MTL on the GLUE benchmark dataset. This model uses a hierarchical feature extractor composed of a pre-trained transformer followed by lightweight sequence modelling. Given an input token sequence of length $T = 128$, the transformer (\code{bert-base-uncased}) produces contextual token embeddings $\mathbf{H}=[\mathbf{h}_1,\dots,\mathbf{h}_T]\in\mathbb{R}^{T\times d_b}$, where $d_b=768$ corresponds to the hidden size of BERT-base. These embeddings are processed by a two-layer bidirectional LSTM with $\frac{T}{2}$ hidden units per direction (total hidden dimension \(d=768\)), producing time-indexed BiLSTM states \(\mathbf{u}_t\in\mathbb{R}^d\) for \(t=1,\dots,T\). 

To obtain a fixed-length sentence representation, we apply masked elementwise max-pooling over the sequence (padding positions are excluded via the attention mask \(m_t\in\{0,1\}\)):
\begin{equation}
\tilde{\mathbf{u}}_t \;=\; 
\begin{cases}
\mathbf{u}_t & \text{if } m_t=1,\\[6pt]
-\infty & \text{if } m_t=0,
\end{cases}
\qquad
\mathbf{s} \;=\; \max_{t=1,\dots,T} \tilde{\mathbf{u}}_t,
\label{eq:pool}
\end{equation}
where the max is taken elementwise and produces the pooled sentence vector \(\mathbf{s}\in\mathbb{R}^d\). For single-sentence tasks, $\mathbf{s}$ is projected to a lower-dimensional latent vector $\mathbf{z}\in\mathbb{R}^{D}$ via a two-layer MLP with \code{tanh} activation. For sentence-pair tasks, we compute pooled vectors \(\mathbf{s}_1,\mathbf{s}_2\) for each sentence and form an interaction-aware representation by concatenating the two vectors together with their elementwise absolute difference and Hadamard product:
\begin{equation}
\mathbf{p} \;=\; \big[\,\mathbf{s}_1 \;;\; \mathbf{s}_2 \;;\; |\mathbf{s}_1 - \mathbf{s}_2| \;;\; \mathbf{s}_1 \odot \mathbf{s}_2\,\big].
\label{eq:pair}
\end{equation}
The vector $\mathbf{p}\in\mathbb{R}^{4d}$ is then projected through a similar MLP to yield the task-dependent latent vector $\mathbf{z}\in\mathbb{R}^{D}$, where $D = 30$ as per our design. 

Due to computational constraints, we use a stable, widely adopted pretrained encoder (\code{bert-base-uncased}) with its weights frozen rather than using more recent, higher-capacity transformer models (e.g., \code{RoBERTa}, \code{DeBERTa}, \code{T5}). We also note that alternative contextual encoders used in prior work, such as ELMo \citep{peters-etal-2018-deep} from AllenNLP, as mentioned in \cite{gluebenchmark}, are not compatible with current versions of \code{PyTorch} and cannot be integrated into our training pipeline.

Next, the above model architecture supports multiple task-specific head variants, enabling a systematic comparison between classical and quantum-enhanced approaches to multi-task learning using an NLP-based feature extractor on the GLUE dataset. We describe the different head architectures below:
\begin{itemize}
    \item \textbf{Ours:} Our proposed quantum circuit consists of $Q=10$ qubits, $L=3$ encoding layers, and $T=9$ tasks. This design choice is deliberate because the input dimension of the quantum circuit has to match the output dimension of the feature extractor ($M=Q\cdot L=30$). We set the number of qubits for each task-specific sub-circuit based on the task-type or number of classes for that task: i.e., $S_{t=1, \cdots, 8} = 1$ for binary classification tasks (e.g., COLA, SST-2, MRPC, QNLI, QQP, RTE, and WNLI) and for the regression task (STS-B), while $S_{t=9}=2$ for the 3-class classification task (MNLI). We use a learnable affine transformation for STS-B, applying a scaling parameter $\gamma$ (initialized at $1.0$) and a bias parameter $\beta$ (initialized at 0.5) to the raw quantum output, $z$, such that the predicted regression output, $\hat{y} = \beta + \gamma z$. This transformation is required so that the model can better match the similarity scores, which are normalized to $[0,1]$.
    
    For each of the task-specific sub-circuits, we use only $L_h = 1$ layer of \code{StronglyEntanglingLayer}. $L_h$ is a hyperparameter we explored, and preliminary experiments showed that $L_h=1$ yields the best overall performance across all metrics. The outputs of each sub-circuit correspond to expectation values of carefully chosen observables: single-qubit Pauli-Z measurements for binary/regression tasks. For 3-class tasks, we employ individual Pauli-Z measurements on each qubit to capture local quantum states, and a tensor product Pauli-X measurement to capture non-local quantum correlations between qubits, i.e., we apply the following set of observables: $\{Z_0,\, Z_1,\, X_0 X_1\}$. 
    \item \textbf{Classical:} This consists of $T=9$ task-specific linear layers where their input dimensions are $D=30$, and output dimensions correspond to the number of classes and task type. Specifically, for the regression and binary classification tasks, the output dimension is set to 1, while for 3-class classification task (MNLI) we set the output dimension to 3. We note that we keep the input dimension of the classical head modules similar to our proposed model to enable fair comparison across architectures. 
    \item \textbf{HQNN:} This corresponds to the HQNN model proposed by \cite{phukan_hybrid_2024}, whereby the latent features are first processed by a 4-qubit quantum circuit, and then passed through task-specific linear layers for classification or regression. We first use a linear layer to bring the latent space dimension $D=30$ down to 3, which is the input dimension of the quantum circuit proposed by \citep{phukan_hybrid_2024}. We then use these as inputs to the 4-qubit quantum circuit, and Pauli-Z expectation measurement on each qubit gives a 4-dimensional quantum output. This is the input fed into $T=9$ task-specific linear layers with input dimensionality of 4 and output dimensions corresponding to the task type and number of classes, similar to the classical method explained above. We also increase the quantum circuit size to 10 qubits to match the number used by our proposed architecture, allowing us to compare performance at similar qubit counts.
\end{itemize}

\subsection{\label{subsubsec: cnn_model}Baseline 2: Convolutional neural network (CNN) based multi-task learning on CheXpert dataset}
This model was developed for MTL on the CheXpert dataset \citep{irvin_chexpert_2019}, and we adapt it to benchmark our proposed quantum method on the same dataset.
Specifically, the model consists of a DenseNet121 feature extractor (pretrained on ImageNet), at the end of which we apply \code{ReLU} activation function, \code{AdaptiveAvgPool2d} and dropout before feeding into 2 hidden layers with \code{ReLU} activation function and batch normalization to bring the dimension down to $M$ ($M$ varies as per the head structure we describe below). We consider this CNN architecture specifically because it produced the best performance when compared to other CNN architectures considered in \cite{irvin_chexpert_2019}. The structure of the feature extractor stays constant across experiments to clearly highlight the contribution of our proposed quantum circuit for multi-task learning. During training, we do not freeze the backbone, i.e., the weights of the feature extractor are fine-tuned for the CheXpert dataset. 

To best benchmark the performance of our proposed quantum circuit as a classification head  for multi-task learning on the CheXpert dataset with the CNN-based feature extractor, we consider the following head structures:
\begin{itemize}
    \item \textbf{Ours:} Our proposed quantum circuit uses $Q{=}10$ qubits and $L{=}3$ encoding layers. For CheXpert we have $T{=}5$ tasks and allocate $S_t{=}2$ qubits to each task-specific head ($t=1,\dots,5$). We set the latent dimension to $M{=}Q\!\cdot\!L{=}30$. Each head uses a single \texttt{StronglyEntanglingLayer}, and we evaluate $\{Z_0, Z_1, X_0X_1\}$, yielding three expectations that serve as the three logits. 
    Logits are obtained from Pauli expectation values, scaled by a learnable temperature parameter to align their magnitudes with standard logit scales, which stabilizes training. 
    \item \textbf{Classical:} This consists of five task-specific classification heads (one per pathology), each consisting of linear layers with three output nodes representing 3 logits (negative, uncertain, positive). We set $M=30$ input dimensions for each task-specific linear layer to match the input dimension of our proposed quantum circuit. This structure is developed based on the classical MTL  model studied in \cite{irvin_chexpert_2019}. 
    \item \textbf{HQNN:} First, for this method, we set $M=3$ output dimensions as per \citep{phukan_hybrid_2024}. Then, we append the 4-qubit quantum circuit proposed in \citep{phukan_hybrid_2024}, whose four-dimensional output is scaled by a trainable parameter. The quantum circuit is followed by five linear layers (one per task), each with four input nodes and three output nodes, corresponding to three logits. We also study the performance of this HQNN head module when the number of qubits is increased to $10$, matching the number of qubits in our proposed MTL quantum circuit. This allows us to standardize the qubit resource consumed by either model.     
\end{itemize}

\subsection{\label{subsubsec:HQNN}Baseline 3: Hybrid quantum neural network (HQNN) based multi-task learning on extended MUStARD dataset}
Phukan et al. proposed a HQNN for multimodal multi-task sarcasm, sentiment, and emotion analysis \citep{phukan_hybrid_2024}. This model is one of the latest attempts from the literature to integrate quantum circuits into multimodal multi-task learning for natural language understanding. Specifically, the HQNN was applied to the extended MUStARD dataset \citep{chauhan_sentiment_2020} for sarcasm, sentiment (implicit/explicit), and emotion (implicit/explicit) recognition. 

The model consists of a segment-wise inter-modal attention-based framework and is described in more detail in \citep{chauhan_sentiment_2020}. In this work, we utilize the extracted feature vector representations from \citep{chauhan_sentiment_2020}, similar to how Phukan et al. evaluated their HQNN method for multi-task learning on the extended MUStARD dataset \citep{phukan_hybrid_2024}. We first require a shallow dense network to reduce the feature vector dimensionality to $M$, matching the input dimension of each classification head.
We consider the HQNN a benchmark because it sets a direct precedent for combining quantum layers with multimodal, multi-task problems. Moreover, it provides a valuable benchmark for evaluating quantum enhancements in multi-task learning. By benchmarking against HQNN, we can directly contrast our proposed framework with this quantum baseline on the extended MUStARD dataset and highlight differences in performance, parameter efficiency, and resource usage. More details about the model and the structure of the quantum circuit can be found in \citep{phukan_hybrid_2024}.

Below are the different classification heads we consider to benchmark our method:
\begin{itemize}
    \item \textbf{Ours:} We first project the latent to $M{=}39$ to match a $Q{=}13$, $L{=}3$ quantum input ($Q\!\cdot\!L{=}39$). The five MUStARD tasks use task-local subcircuits with one \texttt{StronglyEntanglingLayer} each: $S_1{=}1$ qubit (sarcasm, binary), $S_{2,3}{=}2$ qubits (sentiment, 3-class), and $S_{4,5}{=}4$ qubits (emotion, 9-class). Logits are taken directly from Pauli expectation values. For the binary head, we use $\langle Z_0\rangle$. For the 3-class heads, we use $\{Z_0, Z_1, X_0X_1\}$ (with observables evaluated in compatible measurement groups). For the 9-class heads we use $\{Z_0, Z_1, Z_2, Z_3\}$ together with their pairwise products $\{Z_0Z_1, Z_1Z_2, Z_2Z_3, Z_3Z_0\}$ and $\{X_0X_1X_2X_3\}$, again grouped by commutativity. These nine expectations match the required logit dimensionality. By using expectation values as logits (without additional dense layers), head parameters scale with locality and shallow depth rather than with output dimension.

    \item \textbf{HQNN:} This corresponds to the HQNN model proposed by Phukan et al. to perform multimodal multi-task classification on the extended MUStARD dataset. Specifically, they employed a dense network to project these features into a $M=3$-dimensional latent space, and then passed these as three parameters ($\beta$, $\gamma$, and $\theta$) into a 4-qubit quantum circuit. The ansatz is repeated three times as per \citep{phukan_hybrid_2024}. The quantum layer produces 4 Pauli-Z expectation values, one from each qubit, which are then fed into classical, task-specific linear layer heads. Each of these has four input nodes, and the number of output nodes corresponds to the number of classes for each of these tasks. We also scale the number of qubits of this quantum circuit to 13 to match the number of qubits used in our proposed quantum circuit. By keeping the number of qubits the same, we can directly compare performances under the same resource usage, and better benchmark our method. Specific details on the quantum circuit structure can be found in \citep{phukan_hybrid_2024}.  
    
    \item \textbf{Classical:} First, we employ two linear layers with $M=39$ output dimensions and \code{ReLU} activation function on the first layer to project the extracted features to the same number of input dimensions as our proposed quantum MTL head module. Next, we use five task-specific linear layers, each with input dimension $M=39$, and output dimension equal to the number of classes for each task. We set the input dimension to $M=39$ to match the input dimension of our proposed quantum head circuit for fair performance and parameter-cost comparison. 
\end{itemize}

\section{\label{appendix:methods} Detailed training and evaluation methods}
In this section, we describe how we train and evaluate the different model variants on each dataset. 
\subsection{\label{subsubsec:glue_train}Training and evaluation method on GLUE dataset}
We train all the models described in Section \ref{subsubsec: nlp_model} using a multi-task optimization strategy in which a single GLUE task is sampled at every training update, rather than jointly consuming batches containing mixed-task examples. For each epoch, we first fix a capped number of training batches per task (this is set to $300$ for large datasets, e.g., MNLI and QQP, which have above $3,000$ batches in the training set (batch size = $128$)). This setup prevents these very large datasets from dominating the optimization and provides a more balanced contribution from tasks with fewer samples (e.g., WNLI, RTE, MRPC, and CoLA). At each update, a task $t$ is drawn with probability proportional to the number of its remaining capped batches, a batch from that task is fetched, and converted to the unified input format (single-sentence or sentence-pair). This is then passed through the shared Bert-based encoder and the corresponding task-specific head. We compute the appropriate task loss (\code{MSELoss} for STS-B, \code{BCEWithLogitsLoss} for GLUE's binary classification tasks, and \code{CrossEntropyLoss} for multi-class tasks). The loss is then backpropagated with gradient clipped at a maximum norm of $1.0$, and all the trainable parameters are updated using the \code{AdamW} optimizer with learning rate $5\times 10^{-4}$ and weight decay $0.01$.

We define two sets of metrics: primary and secondary. These are largely adapted from the original GLUE benchmark, but here we explicitly distinguish between them: primary metrics follow the GLUE scoring protocol and drive model selection, while secondary metrics provide complementary diagnostic information. 

Training proceeds for up to $15$ epochs, with periodic validation every $2,000$ updates. At each validation checkpoint, we evaluate the model on the validation split of every GLUE task using the primary metric for that task (Matthews correlation for CoLA, accuracy for most classification tasks, and Spearman correlation for STS-B) and compute a macro-average across tasks. This macro-average drives a \code{ReduceLROnPlateau} scheduler (decay factor $0.2$, and minimum learning rate $10^{-7}$) and is also used for early stopping. We retain the checkpoint with the best macro-average and terminate training if no improvement is observed, after a fixed patience window of $5$ validation checks. We also highlight that this training protocol is largely adapted from the original GLUE benchmark \citep{gluebenchmark}. We also compute a secondary metric for each GLUE task. A summary of the primary and secondary metrics is provided in Table \ref{tab:datasets_metrics}. 

We note that in all experiments, we freeze the parameters of the BERT-based feature extractor and train only the task-specific head layers (quantum, HQNN, or classical). This design choice is deliberate: our goal is to isolate the contribution of the head architecture itself, enabling a fair comparison between our quantum, classical, and hybrid multi-task learning head architectures without confounding effects from large-scale encoder fine-tuning. Consequently, we do not aim to reach state-of-the-art GLUE performance as reported on the \href{https://gluebenchmark.com/leaderboard}{public leaderboard}. Our setup, therefore, focuses on comparing head architectures using a controlled, consistent representation-learning backbone.
The same training protocol, including task sampling, batch caps, optimizer, and scheduler, is applied identically to all the head variants to ensure a fair comparison.

\subsection{\label{subsubsec:chexpert_train}Training and evaluation method on CheXpert dataset}
Each model was trained in a multi-task three-class formulation, where each task-specific head predicts one of three categories--uncertain, negative, or positive. We fine-tune a DenseNet-121 backbone initialized with ImageNet weights (i.e., base weights are not frozen) and project backbone features to a shared low-dimensional representation for the diverse multi-task classification heads as described in Section \ref{subsubsec: cnn_model}. 

We highlight that the CheXpert dataset is highly task- and label-imbalanced. Hence, we adopt the following methods to train our models. First, to mitigate class imbalance, we compute inverse-frequency per-class weights for each task:
\begin{equation}
w_{c,t}=\frac{N}{K\cdot n_{c,t}},
\label{eq:sample_weight}
\end{equation}
where \(N\) is the total number of samples in the training split, \(K=3\) is the number of classes, and \(n_{c,t}\) is the count of class \(c\) for task \(t\). These per-class weights are applied as loss weights (see below). Next, training uses Focal Loss, $\mathcal{L}_{\mathrm{FL}}$, to emphasise hard examples. We use the \code{CrossEntropyLoss} computation to extract \(\log\hat{p}\) with \(\hat p\) the model probability for the true class. The focal loss is computed as below:
\begin{equation}
\mathcal{L}_{\mathrm{FL}}(\hat p) = -\alpha (1-\hat p)^\gamma \log\hat p,
\label{eq:focal_loss_app}
\end{equation}
with \(\gamma=2\) and \(\alpha=1\) unless stated otherwise. The per-task class weights \(w_{c,t}\) are incorporated into the loss (i.e., per-class weighting inside the cross-entropy computation). Missing labels are masked (set to a sentinel, e.g., \(-100\)) so they do not contribute to the loss or gradients. We apply gradient clipping (maximum norm \(=1.0\)) and optimise with \code{Adam} (learning rate \(10^{-4}\)), batch size \(=32\) for our proposed and classical model, and batch size \(=128\) for the HQNN models. We train the models for up to $30$ epochs with early stopping (patience \(=5\)) selected on the validation mean F1 across tasks, computed every two epochs, and backpropagate through the quantum circuits wherever relevant via the parameter-shift algorithm \citep{schuld_evaluating_2019}. We highlight that we adopt a training method similar to that proposed in \citep{irvin_chexpert_2019}.

For evaluation, 3-class outputs are converted to clinically interpretable binary probabilities by excluding uncertain labels and renormalizing the negative/positive probabilities:
\begin{equation}
\label{eq:bin_out}
p_{+}^{(\mathrm{binary})}=\frac{p_{+}}{p_{+}+p_{-}},
\end{equation}
where \(p_{-}\) and \(p_{+}\) are the model’s predicted probabilities for the negative and positive classes, respectively.
Binary predictions use a threshold of \(0.5\). We report Accuracy, Precision, Recall and F1 for the positive (clinical) class averaged across the five tasks. This approach allows the model to learn from uncertain examples during training while restricting evaluation to the clinically relevant positive/negative decision, consistent with Irvin \emph{et al.} \citep{irvin_chexpert_2019}.

\subsection{Training and evaluation method on extended MUStARD dataset}
We constructed each model specified in Section \ref{subsubsec:HQNN}, with the input being the feature vector representations of the multimodal extended MUStARD dataset and the output being 5 task-specific labels. In this training protocol, we optimize a weighted sum of task losses using \code{Adam} with a learning rate of $10^{-3}$, a batch size of 16, and 50 epochs. Backpropagation across the quantum circuit is done via the parameter-shift algorithm \citep{schuld_evaluating_2019}. The \emph{sarcasm} task (binary) uses \code{BCEWithLogitsLoss}, while multi-class tasks (3- and 9-class) use \texttt{CrossEntropyLoss}; fixed task weights of $0.2$ balance gradients across tasks for our model, while we use a weight of $1.0$ for the classical and HQNN models. 
After each training epoch, we evaluate the model on the validation set. To checkpoint the best model, we use the best validation F1-score on the \emph{sarcasm} task, as this is the main task and is consistent with \citep{phukan_hybrid_2024}. We then report the final test metrics from this checkpoint. The metrics are as mentioned in Table \ref{tab:datasets_metrics}.




\end{appendices}


\bibliography{sn-bibliography}

\end{document}